\documentclass{article}

\usepackage{microtype}
\usepackage{graphicx}
\usepackage{subcaption}
\usepackage{booktabs} 
\usepackage{makecell}

\usepackage{hyperref}

\usepackage[accepted]{icml2026}

\usepackage{amsmath}
\usepackage{amssymb}
\usepackage{mathtools}
\usepackage{amsthm}
\usepackage{bbm}
\usepackage{extarrows}
\usepackage{stmaryrd}
\usepackage{multirow} %
\usepackage{adjustbox} %
\usepackage[table]{xcolor} %
\usepackage{mathbbol} %
\usepackage{fancyvrb} 
\usepackage{fvextra}

\usepackage{dsfont}
\usepackage{amsfonts}
\usepackage{xcolor}
\usepackage{varwidth}
\usepackage{lscape}
\usepackage{enumitem}
\usepackage{comment}
\usepackage{nicematrix}
\usepackage{algorithm}
\usepackage{algorithmic}

\def\P{\mathbb P}
\def\R{\mathbb R}

\def\nf{\text{nf}}
\def\nm{\text{nm}}

\def\FU{\mathrm{FU}}
\def\fu{\mathrm{fu}}

\def\P{\mathbb P}
\def\R{\mathbb R}

\def\min{\text{min}}
\def\sup{\text{sup}}
\def\max{\text{max}}

\theoremstyle{plain}
\newtheorem{theorem}{Theorem}[section]
\newtheorem{proposition}[theorem]{Proposition}

\theoremstyle{definition}
\newtheorem{definition}[theorem]{Definition}

\theoremstyle{remark}
\newtheorem{remark}[theorem]{Remark}
\newtheorem{appendix_theorem}{Theorem (Re)}
\newtheorem{appendix_proposition}{Proposition (Re)}

\usepackage[capitalize,noabbrev]{cleveref}

\usepackage[textsize=tiny]{todonotes}

\icmltitlerunning{Inference-Time Conformal Reasoning with Valid Factuality Control for Large Language Models}

\begin{document}

\twocolumn[
  \icmltitle{Inference-Time Conformal Reasoning with Valid Factuality Control\\for Large Language Models}



  \icmlsetsymbol{equal}{*}

  \begin{icmlauthorlist}
    \icmlauthor{Ting Wang}{equal,yyy}
    \icmlauthor{Yuanjie Shi}{equal,comp}
    \icmlauthor{Yan Yan}{comp}
    \icmlauthor{Huan Zhang}{yyy}
  \end{icmlauthorlist}

  \icmlaffiliation{yyy}{University of Illinois Urbana-Champaign}
  \icmlaffiliation{comp}{School of EECS, Washington State University}

  \icmlcorrespondingauthor{Yan Yan}{yan.yan1@wsu.edu}
  \icmlcorrespondingauthor{Huan Zhang}{huan@huan-zhang.com}

  \icmlkeywords{Machine Learning, ICML}

  \vskip 0.3in
]



\printAffiliationsAndNotice{\icmlEqualContribution}

\begin{abstract}
Large language models (LLMs) increasingly perform multi-step reasoning, where intermediate claims form implicit directed acyclic graphs whose node correctness is structurally conditioned on their ancestors.
This makes factuality uncertainty structural, rather than a trivial accumulation of node-wise errors, and necessitates inference-time uncertainty quantification over the reasoning structure.
While conformal prediction (CP) offers flexible user-specified factuality control, existing work remains post-hoc and cannot intervene during generation.
To fill the gap between CP’s flexibility and its post-hoc limitation, we propose an \emph{Inference-Time Conformal Reasoning (ITCR)} framework that integrates CP directly into reasoning graph generation.
ITCR learns a structure-level factuality uncertainty function that aggregates claim-level factuality signals over reasoning graphs without complex modeling assumptions.
We then design the non-conformity score based on graph-level factuality uncertainty and calibrate the conformal threshold to decide when to stop generation.
We theoretically show such generation is nested, yielding valid coverage guarantees for factuality control.
Experiments over multiple datasets and coverage objectives demonstrate empirically valid coverage.
In downstream reasoning tasks, inference-time calibrated graphs yield more accurate generation than post-hoc pruned graphs.
\end{abstract}
\section{Introduction}

\begin{figure*}
    \centering
    \includegraphics[width=\linewidth]
    {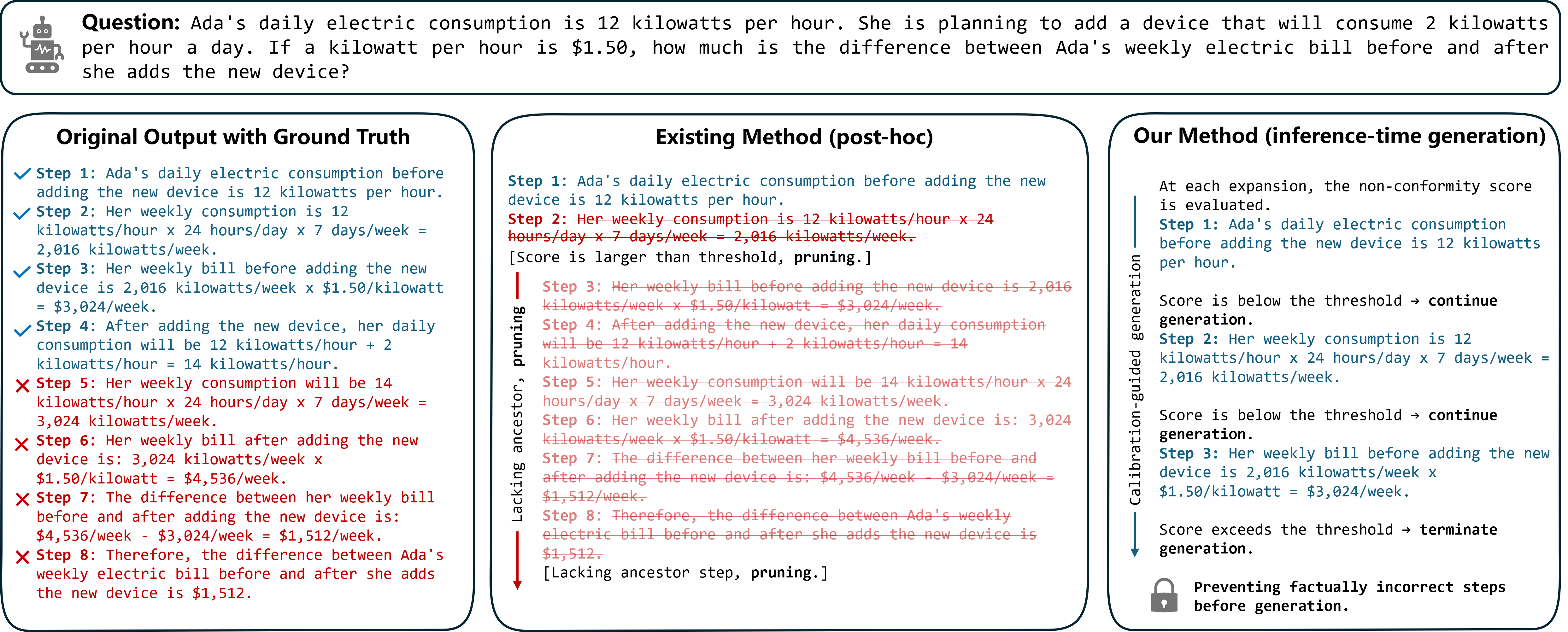}
    \caption{
    \textbf{Prior post-hoc pruning \cite{rubin2025conformal} vs.\ our inference-time generation} on an  example from LLaMA-3.1-8B-Instruct with GSM8K dataset \cite{cobbe2021gsm8k}.
\textbf{(Left)} original output with ground truth. Intermediate steps are annotated as correct (blue) and incorrect (red).
\textbf{(Center)} existing post-hoc method. After full generation, conformal filtering prunes steps exceeding the threshold, followed by a second pass removing steps with missing ancestors.
\textbf{(Right)} our method. Conformal calibration is performed at inference time: the score is checked at each expansion, and generation stops immediately when the threshold is crossed.
Center and right panels use the same confidence score and $85\%$ confidence level; both outputs are guaranteed to contain no factually incorrect steps.
    }
    \label{fig:example}
\end{figure*}

Large language models (LLMs) increasingly perform multi-step reasoning, producing sequences of intermediate claims with explicit dependency relations \cite{wei2022chain,xu2025towards,besta2025demystifying}. 
These dependencies induce an implicit directed acyclic graph (DAG) \cite{besta2025demystifying,jiang2024resprompt,lightman2023let,zhang2023language}, where each node represents a claim and edges capture logical or evidential support \cite{li2025graphmind}. 
Crucially, the correctness of a node is not self-contained but conditioned on its ancestors: an upstream error can invalidate all downstream claims \cite{lei2025reasoning,wang2023towards}.


This dependence materially changes factuality uncertainty in reasoning \cite{creswell2022faithful}. 
Rather than arising from a simple aggregation of independent node-level errors, uncertainty is defined over the structure of the reasoning itself, reflecting which subsets of claims are unreliable \cite{paul2024making}. 
As reasoning is generated incrementally under ancestor dependencies, post-hoc or node-wise uncertainty estimates are structurally misaligned with the valid factuality of the resulting graph \cite{creswell2022faithful,Holtzman2020The}.
Reliable control therefore, requires inference-time uncertainty quantification at the level of the reasoning structure \cite{yao2023tree,madaan2023self}, where validity is assessed and regulated over coherent collections of claims instead of isolated nodes.

Recent work \cite{rubin2025conformal} applies conformal prediction (CP) \cite{vovk2005algorithmic,angelopoulos2021gentle}, a model-agnostic and distribution-free framework with user-specified coverage guarantees \cite{fontana2023conformal}, to factuality control in LLM reasoning. 
This method provides guarantees over coherent subsets of claims, ensuring that all retained claims are jointly consistent with ground-truth knowledge with probability at least $1-\alpha$. 
However, existing methods operate in a post-hoc manner: a complete multi-step response is first generated, after which CP is applied over candidate subsets and then enforces ancestor-closure constraints on the retained claims.
As calibration is defined only over generated content, CP cannot influence the generation process itself.
This motivates the central question of this paper: \emph{can we design inference-time conformal control for factuality, rather than post-hoc calibration, while preserving coverage guarantees for LLM reasoning?}

To address this question, we propose \emph{Inference-Time Conformal Reasoning (ITCR)}, which integrates conformal prediction directly into the generation of reasoning graphs.
Intuitively, ITCR performs factuality control at inference time, using calibrated uncertainty to decide whether the reasoning graph should expand or terminate.
Enabling such control requires calibration over partially generated reasoning graphs.
Accordingly, ITCR operates on uncertainty defined directly over intermediate subgraphs.
Figure~\ref{fig:example} illustrates this distinction by contrasting prior post-hoc pruning with the inference-time generation paradigm of ITCR.

Specifically, ITCR learns a structure-level factuality uncertainty function that maps an ancestor-closed reasoning subgraph to a scalar uncertainty score by aggregating inference-time claim-level signals.
Based on this uncertainty function, we define a graph-level non-conformity score and calibrate a conformal threshold on held-out data.
At test time, reasoning proceeds by incrementally expanding the graph and terminates once the non-conformity score exceeds the calibrated threshold, returning the current ancestor-closed subgraph as the final output.
We show that this procedure induces a nested sequence of reasoning subgraphs, enabling valid coverage guarantees for factuality control via conformal prediction.
Across multiple datasets and coverage targets, ITCR achieves empirically valid coverage.
Moreover, inference-time calibrated reasoning graphs consistently yield more accurate generations than post-hoc pruned graphs in downstream reasoning tasks.

\textbf{Contributions.} 
The key contributions of this paper include:
\begin{itemize}
    \item We propose Inference-Time Conformal Reasoning (ITCR), which integrates conformal prediction into reasoning graph generation to enable inference-time factuality control rather than post-hoc in prior work. 
    \item We show that ITCR induces nested reasoning graphs, yielding valid conformal coverage guarantees over structured reasoning outputs. 
    \item Experiments across multiple reasoning benchmarks and LLM backbones show that ITCR achieves empirically valid coverage and improves reasoning accuracy by $18.77\%$ on average. 
\end{itemize}

\section{Problem Setup and Motivation}

\textbf{Notations.}
Let $\mathcal{X}$ denote the input space (e.g., questions or prompts), and let $X \in \mathcal{X}$ be a given input instance.
Let $P_X$ denote the data distribution over inputs $X \in \mathcal{X}$.
Denote $\mathcal{C}$ as the claim space, i.e., the set of atomic factual statements.
For structured reasoning outputs, an input $X$ induces a directed acyclic graph (DAG) $G_X=(V,E)$, where each node $v \in V$ corresponds to a claim $c_v \in \mathcal{C}$, and each directed edge $(u,v)\in E$ indicates that claim $c_v$ depends on claim $c_u$.
When the dependence on $X$ is clear from context, we write $G$ for simplicity.
We assume $G$ is acyclic but make no further assumptions on its structure.
For any node $v \in V$, we define its ancestor set as $\mathrm{Anc}(v):= \{ u \in V : u \rightsquigarrow v \text{ in } G \}$, where $u \rightsquigarrow v$ denotes the existence of a directed path from $u$ to $v$.
A reasoning subgraph is denoted by $U = (V_U, E_U) \subseteq G$, where $V_U \subseteq V$ and $E_U \subseteq E$.
Such a subgraph $U$ is ancestor-closed if $\forall v \in V_U, \mathrm{Anc}(v) \subseteq V_U$.
Let $\mathcal{T} \subseteq V$ denote the set of factually correct nodes in the reasoning graph.
For each node $v \in V$, we define a node-level factuality uncertainty score $\mathrm{fu}(v) \in [0,1]$, which measures the likelihood that claim $c_v$ violates factual correctness given its supporting ancestors.

\textbf{Conformal Language Model Reasoning.}
Existing conformal reasoning approaches~\cite{rubin2025conformal} perform factuality control on a fixed reasoning graph~$G$ using node-level factuality uncertainty scores~$\mathrm{fu}(v)$.
For a given threshold~$\tau$, a candidate subgraph is constructed through a \emph{two-stage pruning procedure}.
First, nodes are filtered by score thresholding, retaining the set $V_\tau = \{ v \in V : \mathrm{fu}(v) \le \tau \}$.
Second, ancestor closure is enforced by iteratively removing any node $v \in V_\tau$ such that $\mathrm{Anc}(v) \nsubseteq V_\tau$, yielding an ancestor-closed subgraph~$U_\tau \subseteq G$.
Conformal prediction (CP) is then carried out over the threshold-indexed family $\{U_\tau\}$, and the calibrated output corresponds to selecting a subgraph that satisfies the coherent factuality guarantee at level~$1-\alpha$.
Accordingly, factuality control in this setting is defined purely as a selection problem over a fixed graph, rather than as a control mechanism during generation.

To address this limitation, we formulate an inference-time objective that directly outputs an ancestor-closed subgraph $\widehat U = (V_{\widehat U}, E_{\widehat U})$ during generation.
Our goal is to control the factuality of $\widehat U$ with two different coverage targets.

\textbf{No-False Coverage.}
The no-false target requires that the predicted reasoning subgraph contains no factually incorrect nodes. 
The corresponding coverage guarantee is:
\begin{align}
\label{eq:no_false_coverage}
\mathbb{P} \left( V_{\widehat U} \subseteq \mathcal T \right) \ge 1 - \alpha.
\end{align}
Among all ancestor-closed subgraphs satisfying this guarantee, the target output is maximal with respect to inclusion.

\textbf{No-Miss Coverage.}
The no-miss target requires that all factually correct nodes are included in the predicted subgraph.
Formally, the corresponding coverage guarantee is:
\begin{align}
\label{eq:no_miss_coverage}
\mathbb{P} \left( \mathcal T \subseteq V_{\widehat U} \right) \ge 1 - \alpha.
\end{align}
Accordingly, the target output is the minimal ancestor-closed subgraph satisfying the coverage guarantee.

\section{Inference-Time Subgraph Prediction}

\begin{figure*}[!t]
    \centering
    \begin{minipage}{0.24\linewidth}
    \centering
    \textbf{(a)} Retained Ratio Distribution by \cite{rubin2025conformal}
    \includegraphics[width = \linewidth]{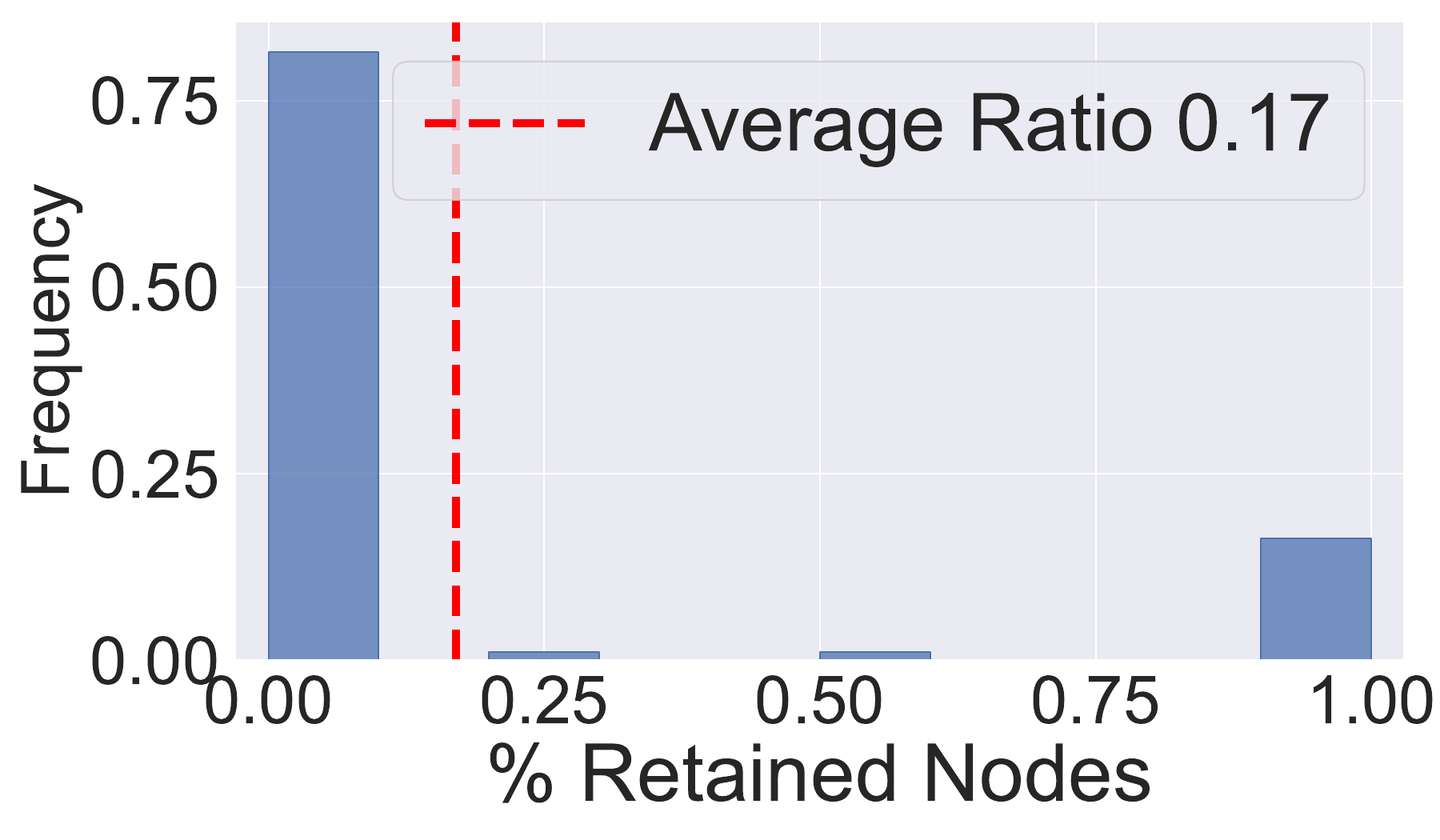}
    \end{minipage} 
    \begin{minipage}{0.24\linewidth}
    \centering
    \textbf{(b)} Original Graph
    \includegraphics[width=\linewidth]{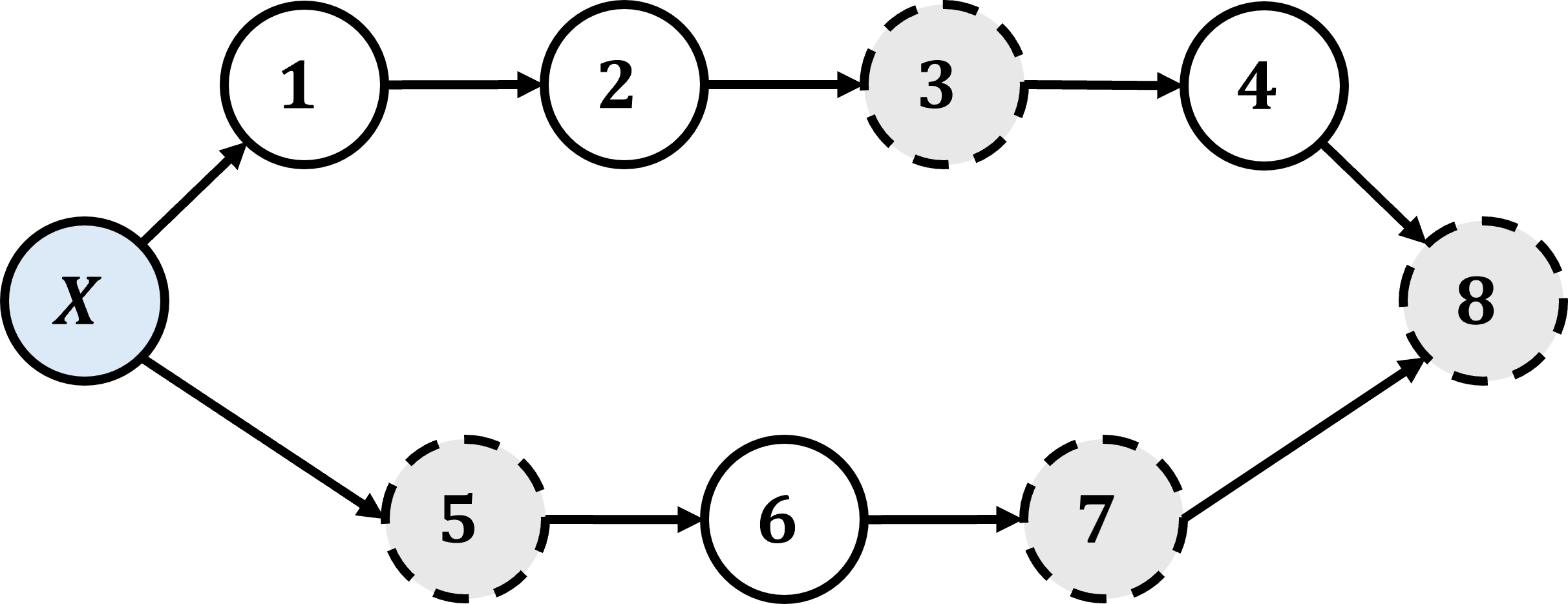}
    \end{minipage} 
    \begin{minipage}{0.24\linewidth}
    \centering
    \textbf{(c)} No-False Objective
    \includegraphics[width = \linewidth]{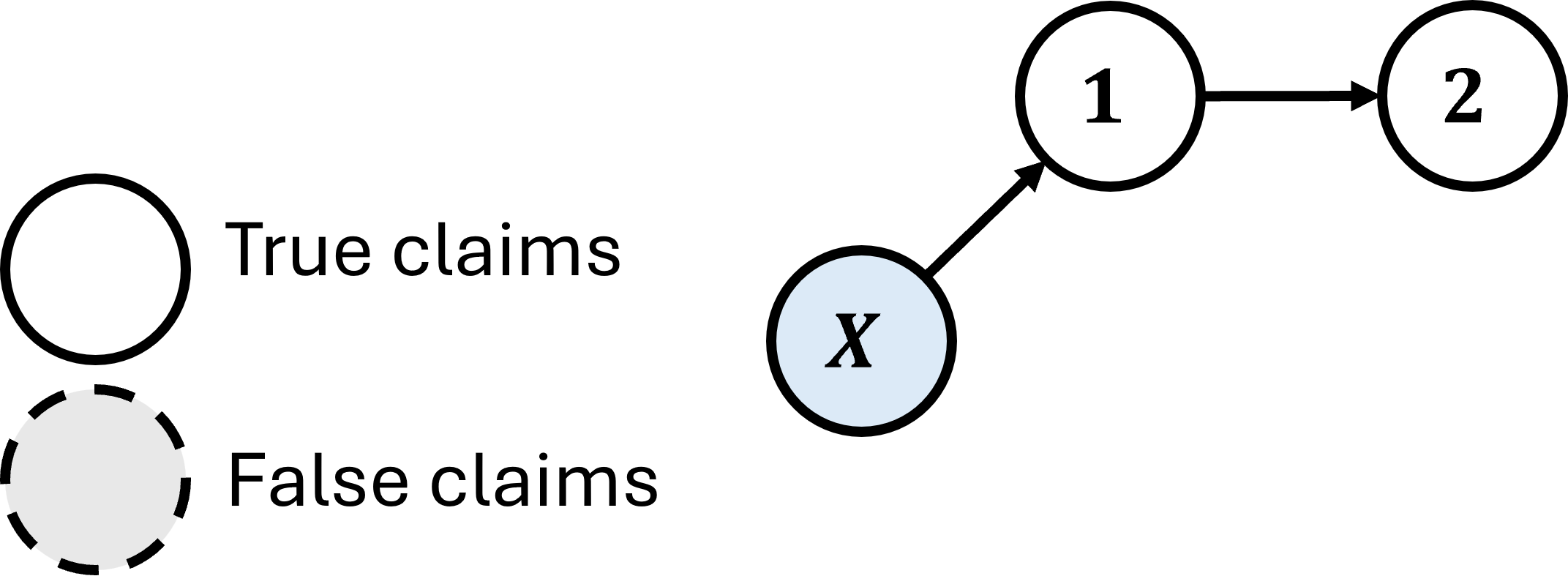}
    \end{minipage} 
    \begin{minipage}{0.24\linewidth}
    \centering
    \textbf{(d)} No-Miss Objective
    \includegraphics[width=\linewidth]{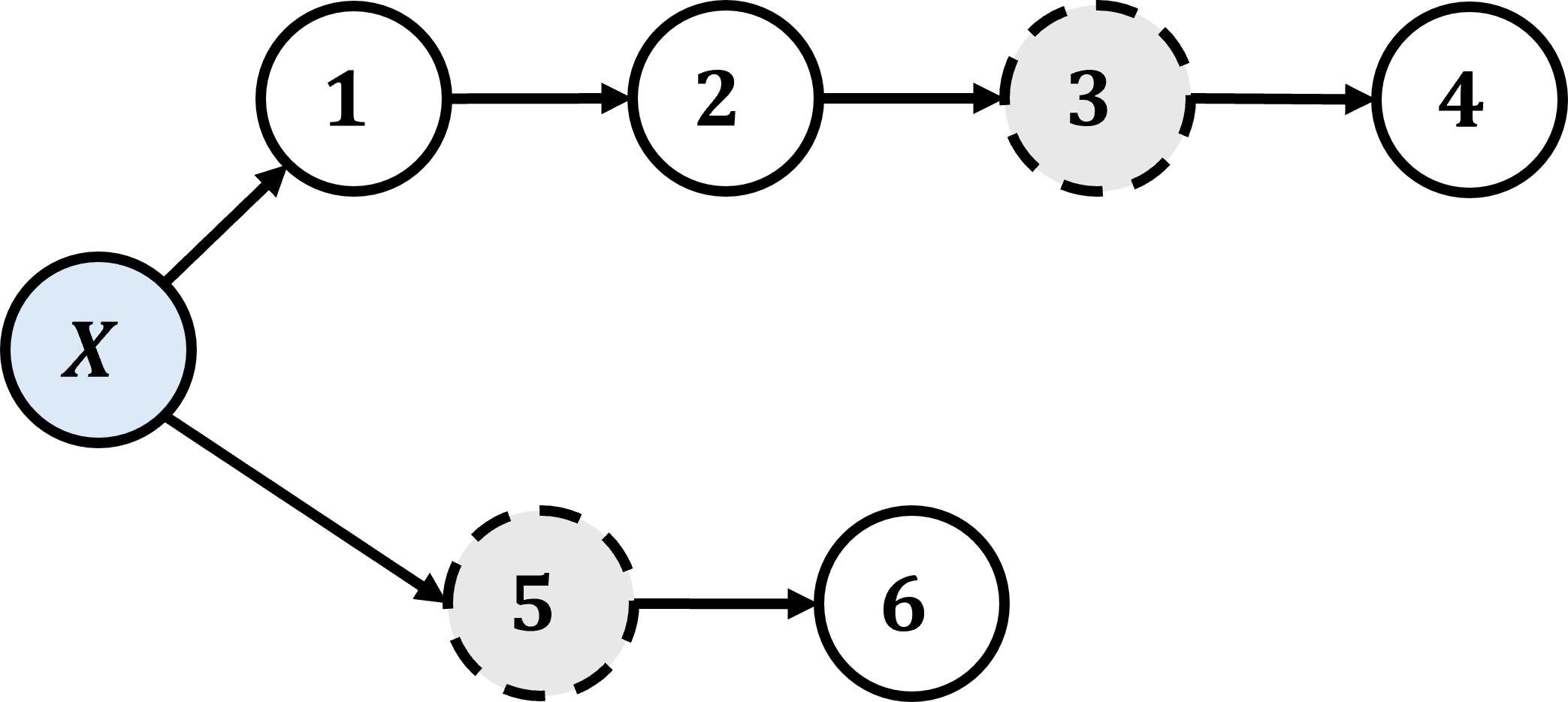}
    \end{minipage} 
    \caption{
    \textbf{Motivation for the No-Miss Objective.}
    \textbf{(a)} Distribution of retained node ratios produced by previous method \cite{rubin2025conformal} on the QA dataset \cite{chen2023felm}.
    Although the average retained ratio is moderate ($0.17$), a substantial fraction of samples collapse to $0$ retained nodes, corresponding to complete abstention.
    \textbf{(b)} An example of an original reasoning graph containing both true (solid) and false (dotted) claims.
    \textbf{(c)} No-false (precision-oriented) objective, which outputs a maximal ancestor-closed subgraph containing no false claims and may yield overly small outputs or complete abstention. 
    \textbf{(d)} No-miss (recall-oriented) objective, which outputs the minimal ancestor-closed subgraph containing all true claims, at the cost of allowing a limited number of uncertain nodes.
    }
    \label{fig:no_miss}
\end{figure*}

\subsection{Learning Graph-Level Factuality Uncertainty}
\textbf{Why Graph-Level Uncertainty Is Necessary for Reasoning.}
Inference-time factuality in structured reasoning is inherently a graph-level property: 
The correctness of a claim depends on the presence and correctness of its supporting ancestors, and valid outputs must therefore be ancestor-closed \cite{madaan2023self}.
This structural dependence makes factuality uncertainty conditional across nodes, so that node-wise estimates alone cannot support inference-time control and are limited to post-hoc pruning or structural repair after a full reasoning graph is generated.

This limitation is reflected in prior conformal approaches \cite{rubin2025conformal}, which enforce factuality guarantees only after complete graph generation via subgraph selection and repair, revealing a mismatch between local uncertainty and graph-level reliability objectives.
To enable principled inference-time generation under ancestor constraints, we thus require a notion of graph-level factuality uncertainty that quantifies the risk of a reasoning subgraph.

However, there is no general principle for aggregating node-wise uncertainty into a structurally consistent, conformal-calibratable graph-level measure \cite{wang2024uncertainty}.
We therefore propose to learn such a graph-level factuality uncertainty function from data, using conformal supervision to obtain valid coverage guarantees during generation.

\textbf{Binary Supervision and Black-Box Uncertainty Modeling.}
Concretely, we parameterize a subgraph-level factuality uncertainty function:
\begin{align}
\label{eq:fu_function}
\FU_\theta: (U, \{\fu(v)\}_{v \in V_U}) \;\mapsto\; \mathbb{R}_{+},
\end{align}
where $\FU_\theta(U, \{\fu(v)\}_{v \in V_U})$ evaluates the factuality uncertainty of the currently generated subgraph $U$.
$\FU_\theta(U, \{\fu(v)\}_{v \in V_U})$ is intended to reflect uncertainty of the factual violation of a subgraph $U$.

We construct supervision at the subgraph level using factual violations.
Graph-level factuality is a binary predicate, where each subgraph $U$ is either consistent with the reference world or not \cite{creswell2022faithful}. Therefore, learning $\FU_\theta$ reduces to binary classification, with its continuous output interpreted as a factuality uncertainty score for ranking.
In principle, $\FU_\theta$ can be instantiated by any permutation-invariant model over sets or graphs, including linear models \cite{wang2018graph}, tree-based models \cite{shi2023rf}, or neural networks \cite{wu2020comprehensive}.

Importantly, the accuracy of the learned factuality uncertainty function does not affect the validity of the conformal coverage guarantees.
CP treats the learned factuality uncertainty score as a black-box quantity and provides coverage regardless of how it is obtained.
Learning primarily influences efficiency, rather than coverage itself.

\subsection{Conformal Calibration and Inference Algorithm}

\begin{algorithm}[t]
\caption{Inference-Time Conformal Subgraph Generation via Threshold Stopping}
\label{alg:cp-test}
\begin{algorithmic}[1]
\STATE \textbf{Input:}
Test instance $X_{n+1}$;
non-conformity score $S(\cdot)$; 
threshold $\tau_{\alpha}$.

\STATE Initialize $U^0$ (e.g., root-only subgraph) and set $t \gets 0$
\label{alg:line:initialization}
\WHILE{true}
\label{alg:line:loop_begin}
    \IF{$S(U^t) \le \tau_{\alpha}$}
    \label{alg:line:compare_threshold}
        \STATE $\widehat{U} \gets U^t$
        \label{alg:line:graph}
        \STATE Obtain the next expanded subgraph $U^{t+1}$ from generator
        \label{alg:line:continue_generation}
        \STATE $t \gets t+1$
        \label{alg:line:continue_time}
    \ELSE
        \STATE \textbf{break}
        \label{alg:line:stop_generation}
    \ENDIF
\ENDWHILE
\STATE \textbf{return} $\widehat{U}$
\label{alg:line:return_graph}
\end{algorithmic}
\end{algorithm}

\textbf{Nested property for inference-time generation.}
Previous work~\cite{shafer2008tutorial,gupta2022nested} shows that standard conformal prediction (CP) can be formulated via a family of nested prediction sets $\{F_t\}_{t\in\mathcal{T}}$, where calibration selects the smallest index $t_\alpha$ to achieve valid coverage.
In this view, the non-conformity score is the inverse map $r(x,y)=\inf\{t:y\in F_t(x)\}$, and thresholding is meaningful because the sets are nested: $F_{t_1}\subseteq F_{t_2}$ for $t_1\le t_2$.

Inference-time reasoning over graphs induces an analogous but sequential structure: the output is constructed as an inclusion-ordered chain of ancestor-closed subgraphs $U^1\subset U^2\subset\cdots\subset U^{T_G}$, where $\exists T_G < +\infty, \text{ s.t. } U^{T_G}= G$. 
The calibrated threshold $\tau$ is queried repeatedly along this chain to decide when to stop.
To transfer the nested-set view of CP to this setting, the non-conformity score must be monotone under subgraph expansion.

\begin{definition}[Nested property for non-conformity score during generation]
\label{definition:nested_property}
A non-conformity score $S$ satisfies the nested property if it is monotone under subgraph inclusion, i.e.,
$S(U^1) \le S(U^2) \le \cdots \le S(U^{T_G})$ for any $G$.
\end{definition}
\begin{remark}
Inference terminates at the first index $t$ such that $S(U^t) > \tau$.
The nested property makes this violation irreversible: for all $t' > t$, $S(U^{t'}) \ge S(U^t) > \tau$, so generation cannot re-enter the admissible region.
Without nestedness, one may have $S(U^t) > \tau$ but $S(U^{t+1}) < \tau$, which invalidates calibrated stopping.
\end{remark}

The nested property is a structural requirement specific to sequential generation and is not guaranteed by a learned factuality predictor alone.
Although $\FU_\theta(U,\{\fu(v)\}_{v\in U})$ estimates graph-level uncertainty, its sigmoid-transformed output is unconstrained and need not be monotone along $\{U^t\}^{T_G}_{t=0}$.
Therefore, unlike standard CP, where any black-box score can be post-hoc calibrated, inference-time control requires a non-conformity score whose form enforces nestedness, so that the calibrated threshold retains a consistent interpretation as a stopping criterion.

To this end, we define
\begin{align}
\label{eq:non-conformity_score}
S(U) = 1 - \sigma\!\big(\FU_\theta(U, \{\fu(v)\}_{v \in V_U})\big) + \lambda |V_U|,
\end{align}
where $(\cdot)^+:=\max(\cdot,0)$, $\sigma(\cdot)$ is the sigmoid and $\lambda\in(0,1)$.
The first term captures model-based factuality uncertainty, while the size penalty enforces monotonicity by construction, reflecting that uncertainty accumulates under subgraph expansion.
Coverage validity is ensured by conformal calibration for any fixed $\lambda$~\cite{vovk2005algorithmic,papadopoulos2008inductive}.
Meanwhile, $\lambda$ serves as an efficiency hyperparameter~\cite{angelopoulos2020uncertainty,huang2023conformal}.

Coverage holds for any fixed $\lambda$, while nested monotonicity imposes a structural requirement on $\lambda$ to be sufficiently large to offset non-monotone fluctuations of the learned term.
The following proposition gives a sufficient condition on 
$\lambda$.
\begin{proposition}[Condition for nested monotonicity]
\label{prop:lambda_nested}
Let $B(U)=1-\sigma(\FU_\theta(U,\{\fu(v)\}_{v\in V_U}))$.
For each input $X \sim P_X$, consider all ancestor-closed expansions
$U \subset U' \subseteq G_X$, and define
\[
\kappa(X)
\!:=\!
\operatorname*{sup}\limits_{U\subset U' \subseteq G_X}
\frac{\big(B(U)-B(U')\big)^+}{|V_{U'}|-|V_U|},
\,
\kappa
\!:=\!
\operatorname*{sup}\limits_{X\sim P_X}\kappa(X).
\]
If $\lambda \ge \kappa$, then $S(U)$ satisfies the nested property.
\end{proposition}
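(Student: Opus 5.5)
The plan is a direct pairwise comparison along the generation chain. Fix an input $X$ and let $U^0\subset U^1\subset\cdots\subset U^{T_G}=G_X$ be the inclusion-ordered chain of ancestor-closed subgraphs produced during inference. By Definition~\ref{definition:nested_property}, it suffices to show $S(U^t)\le S(U^{t+1})$ for each consecutive pair; chaining these inequalities then gives the full monotone sequence. I will in fact prove the stronger statement $S(U)\le S(U')$ for every ancestor-closed pair $U\subset U'\subseteq G_X$. Consecutive chain elements are a special case of such a pair.

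Take any such pair. Since $U\subset U'$ is a strict inclusion of ancestor-closed subgraphs, we have $V_U\subsetneq V_{U'}$, so $\Delta:=|V_{U'}|-|V_U|\ge 1>0$. This positivity is what makes the ratio in the definition of $\kappa(X)$ well defined.

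Using \eqref{eq:non-conformity_score}, write
\[
S(U')-S(U)=\big(B(U')-B(U)\big)+\lambda\Delta .
\]
By the definition of $\kappa(X)$ as a supremum over all such pairs,
\[
\big(B(U)-B(U')\big)^+\le \kappa(X)\,\Delta\le \kappa\,\Delta\le\lambda\,\Delta .
\]
The second inequality uses $\kappa(X)\le\kappa$, and the third uses the hypothesis $\lambda\ge\kappa$. Since $B(U')-B(U)\ge -\big(B(U)-B(U')\big)^+$, it follows that $S(U')-S(U)\ge -\lambda\Delta+\lambda\Delta=0$.

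This holds for every pair and hence for every consecutive chain element. It holds for every $X$, because the final bound uses only the uniform constant $\kappa$. Thus the nested property holds for any $G$.

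There is essentially no hard step. The only points needing care are interpretive. The first is that $\kappa$ must be finite for the hypothesis to be meaningful. This is automatic, because $B\in[0,1]$ and $\Delta\ge 1$ give $\kappa\le 1$, which is consistent with $\lambda\in(0,1)$ as long as $\kappa<1$. The second is that the supremum over ``$X\sim P_X$'' should be read as an essential supremum. In that reading the nested property holds almost surely in $X$, which is all the subsequent conformal argument requires.
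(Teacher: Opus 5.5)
Your proof is correct and follows the paper's argument essentially line for line: you write $S(U')-S(U)=\lambda\Delta-(B(U)-B(U'))$, lower-bound it through the positive part, use $(B(U)-B(U'))^+\le\kappa\Delta$ from the definition of $\kappa$, conclude from $\lambda\ge\kappa$, and apply the pairwise inequality to consecutive elements of the generation chain. Your side remarks (strict vertex inclusion giving $\Delta\ge 1$, the automatic bound $\kappa\le 1$, and the essential-supremum reading) are reasonable refinements the paper leaves implicit, and they do not change the argument.
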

\begin{remark}
The constant $\kappa$ characterizes the worst-case non-monotone fluctuation of the learned term $B(U)$ per added node along inference-time expansions.
When $\lambda \ge \kappa$, the size penalty dominates these fluctuations, ensuring that $S(U)$ increases under any ancestor-closed expansion and that threshold crossing is irreversible.
In practice, the global constant $\kappa$ is defined via a supremum over all inputs and expansions and is intractable to compute.
Instead, we examine the instance-wise quantity $\kappa(X)$ along calibration trajectories to characterize the scale of non-monotonicity induced by the learned score.
The empirical distribution of $\kappa(X)$ (see Fig.~\ref{fig:kappa}) provides a reference for initializing $\lambda$ and for exploring the efficiency--robustness tradeoff in practice.
\end{remark}

\textbf{Calibration and Inference Algorithms.}
Given the non-conformity score $S(U)$ in \eqref{eq:non-conformity_score} and for each calibration graph $G$, we extract its ground-truth-consistent subgraph and compute the corresponding score $\{S(U^t)\}^{T_G}_i$, keeping the learned mapping and all hyperparameters fixed.
We assume that the calibration graphs and the test graph are i.i.d.\ samples from the same underlying distribution \cite{liao2020pac,velivckovic2017graph}, and therefore exchangeable as required for conformal calibration \cite{shafer2008tutorial,vovk2005algorithmic}.
For a target miscoverage level $\alpha\in(0,1)$, we set the inference threshold $\tau$ according to the corresponding coverage targets, detailed in Section \ref{sec:method_no_false} and \ref{sec:method_no_miss}.
This choice guarantees marginal coverage $1-\alpha$ and defines the stopping criterion for inference-time generation.


At test time, inference proceeds by incrementally constructing a predicted reasoning subgraph according to a fixed expansion rule, summarized in Algorithm~\ref{alg:cp-test}.
At each step $t$, generation continues if $S(U^t) \le \tau_\alpha$, in which case the current subgraph is accepted and the next subgraph $U^{t+1}$ is obtained from the generator (Lines~\ref{alg:line:loop_begin}--\ref{alg:line:continue_time}).
Once the threshold is exceeded, i.e., $S(U^t) > \tau_\alpha$, the procedure terminates and returns the last accepted subgraph $\widehat{U}$ as the final prediction (Lines~\ref{alg:line:stop_generation}--\ref{alg:line:return_graph}).
By monotonicity of the non-conformity score, threshold crossing is irreversible, guaranteeing that the procedure requires no backtracking or post-hoc pruning.

\subsection{No-False Coverage}
\label{sec:method_no_false}

We first consider the precision-oriented objective, no-false coverage (Eq \eqref{eq:no_false_coverage}).
A failure occurs if the predicted subgraph contains at least one false node.
Equivalently, no-false coverage amounts to controlling the false-positive rate (FPR) over subgraphs that contain false nodes.
This reduces calibration to a one-sided conformal problem on a restricted subset, rather than over all candidate subgraphs.

During calibration, for each input $X_i$ that admits a false node, we select
the minimal ancestor-closed subgraph $U_i^{\nf}$ such that
$V_{U_i^{\nf}}\nsubseteq \mathcal T$, and set the no-false threshold
$\tau^\nf_{\alpha}$ as the empirical $\alpha$-quantile of the scores
$\{S(U_i^{\nf})\}$.
Under nested generation and score monotonicity, accepting any later bad subgraph implies accepting this earliest one, so calibrating on these scores suffices for no-false coverage.

\begin{theorem}[No-false coverage guarantee]
\label{thm:no_false_coverage}
Suppose $S(\cdot)$ satisfies the nested property.
Let $\widehat U^\nf$ denote the output of Algorithm~\ref{alg:cp-test} with threshold $\tau^\nf_{\alpha}$.
Then the inference procedure achieves no-false coverage at level $1-\alpha$, i.e.,
$\mathbb{P}\!\left(V_{\widehat U^\nf}\subseteq \mathcal T\right)\ge 1-\alpha$.
\end{theorem}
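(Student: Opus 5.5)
Since $S$ is nested (Definition~\ref{definition:nested_property}), the failure event $\{V_{\widehat U^\nf}\nsubseteq\mathcal T\}$ can be rewritten as a threshold event on one scalar per instance, $S(U^\nf)\le\tau^\nf_\alpha$. The usual split-conformal quantile argument then finishes the proof.

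\textbf{Step 1 (structure of the output).} Fix a test input $X_{n+1}$ with its nested chain $U^0\subset U^1\subset\cdots\subset U^{T_G}=G$. By monotonicity, the set $\{t: S(U^t)\le\tau\}$ is an initial segment $\{0,\dots,t^\ast\}$. So Algorithm~\ref{alg:cp-test} returns $\widehat U=U^{t^\ast}$, the largest chain element whose score is at most $\tau$. (If $S(U^0)>\tau$ the output is empty, an abstention, which is trivially covered.)

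\textbf{Step 2 (reduction to the earliest bad subgraph).} Suppose the test graph has no false node. Then every $U^t\subseteq G$ satisfies $V_{U^t}\subseteq\mathcal T$ and coverage holds deterministically. Otherwise, let $U^\nf_{n+1}$ be the first chain element with $V_{U^\nf}\nsubseteq\mathcal T$; I take this to be the paper's ``minimal ancestor-closed subgraph'' along the generation chain. Because the chain is nested, containing a false node is inherited by every later element. Hence
\[
V_{\widehat U}\nsubseteq\mathcal T \iff U^\nf\subseteq\widehat U \iff S(U^\nf)\le\tau^\nf_\alpha,
\]
where the last equivalence uses Step 1. This gives
\[
\mathbb P(\text{failure})=\mathbb P\big(\text{has false node},\,S(U^\nf_{n+1})\le\tau^\nf_\alpha\big).
\]

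\textbf{Step 3 (conformal quantile bound).} The calibration scores $S(U_i^\nf)$ are computed only on calibration inputs that admit a false node. Conditional on the test input also admitting one, the test score and these calibration scores are exchangeable, because the data are i.i.d. and the map from instance to score is fixed. I take $\tau^\nf_\alpha$ to be the finite-sample lower quantile: the $\lfloor\alpha(m+1)\rfloor$-th smallest of the $m$ calibration scores. Exchangeability then gives $\mathbb P(S(U^\nf_{n+1})\le\tau^\nf_\alpha)\le\alpha$, with ties handled by the usual rank argument or random tie-breaking. Multiplying by $\mathbb P(\text{has false node})\le 1$, or conditioning on the number $m$ of such calibration points and averaging, gives failure probability at most $\alpha$.

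\textbf{Main obstacle.} The delicate part is Step 3. The calibration set is a random subsample, namely the instances with false nodes, so exchangeability must be argued conditionally on membership and on $m$. One also has to handle the cases $m=0$ or $\lfloor\alpha(m+1)\rfloor=0$, where I set $\tau=-\infty$, forcing immediate abstention. Finally, the precise ``empirical $\alpha$-quantile'' must be chosen so that the inequality is in the correct, lower-tail direction.
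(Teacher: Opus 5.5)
Your proposal is correct and follows the paper's proof. Both arguments use nestedness of the chain and monotonicity of $S$ to reduce the failure event to the earliest-bad chain element having score at most the threshold. Both then apply the split-conformal lower-quantile bound to the exchangeable scores of the bad calibration instances together with the test instance. Your choice of the $\lfloor\alpha(m+1)\rfloor$-th \emph{smallest} score is the right one; the appendix's ``largest'' reads as a slip. Your explicit conditioning on which instances are bad, and your treatment of the cases $m=0$ and $\lfloor\alpha(m+1)\rfloor=0$, are cleaner than the paper's.

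One small correction concerns ties. Because the test uses the non-strict comparison $S\le\tau$, ties work against you in this lower-tail direction, so the usual rank argument does not absorb them. You need either almost surely distinct scores, a random per-instance tie-breaker, or a strict comparison $S<\tau$ at test time.
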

\begin{remark}
The coverage guarantee in the above theorem is model-agnostic and distribution-free: it relies only on conformal calibration and the nested property of the nonconformity score, not on the learning performance of $\FU_\theta$.
Improving $\FU_\theta$ mainly affects efficiency (e.g., retaining larger subgraphs or terminating later), while the validity guarantee remains unchanged. Since the same score definition is applied to both calibration and test inputs, exchangeability is preserved, so the conformal guarantee holds.
\end{remark}

\subsection{No-Miss Coverage}
\label{sec:method_no_miss}

We next consider a recall-oriented reliability objective, referred to as no-miss coverage (Eq (\ref{eq:no_miss_coverage})).
As illustrated in Fig.~\ref{fig:no_miss}, no-false coverage enforces a stringent precision criterion that excludes any reasoning subgraph containing false claims.
In structured reasoning graphs, where true and false nodes may be interleaved, this requirement can be overly restrictive, leading to truncated or degenerate outputs.
The no-miss coverage adopts a complementary, recall-oriented notion of reliability by requiring inclusion of all factually correct nodes, while permitting limited uncertainty when necessary.
This relaxation enables more complete reasoning subgraphs and alleviates excessive abstention.

For each calibration example, we consider the minimal ancestor-closed subgraph $U^\nm$ that contains all true nodes.
The no-miss threshold $\tau^\nm_{1-\alpha}$ is selected as the $(1-\alpha)$-quantile of $\{S(U^\nm)\}$ over these minimal subgraphs.

\begin{theorem}[No-miss coverage guarantee]
\label{thm:no_missing_coverage}
Suppose that $S(\cdot)$ satisfies the nested property.
Let $\widehat U^\nm$ be the output returned by Algorithm~\ref{alg:cp-test} with threshold $\tau^\nm_{1-\alpha}$.
Then the inference procedure satisfies no-miss coverage at level $1-\alpha$,
i.e., $\mathbb{P} \left( \mathcal T \subseteq V_{\widehat U^\nm} \right) \ge 1 - \alpha$.
\end{theorem}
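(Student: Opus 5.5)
The plan is to reduce the no-miss event to a statement about a single scalar score per instance, and then apply the standard split-conformal quantile argument. For the test input $X_{n+1}$, let $U^{\nm}_{n+1}$ be the minimal ancestor-closed subgraph of $G_{X_{n+1}}$ containing all true nodes, and let $R_{n+1}:=S(U^{\nm}_{n+1})$. Define $R_i:=S(U_i^{\nm})$ in the same way for the calibration examples. Because the same fixed score $S$ (with $\FU_\theta$ and $\lambda$ frozen) and the same deterministic construction of $U^{\nm}$ are applied to every i.i.d.\ graph, the scores $R_1,\dots,R_n,R_{n+1}$ are exchangeable. The threshold $\tau^{\nm}_{1-\alpha}$ is the $\lceil (n+1)(1-\alpha)\rceil/n$ empirical quantile of $R_1,\dots,R_n$. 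The usual rank argument then gives
\[
\mathbb{P}\big(R_{n+1}\le \tau^{\nm}_{1-\alpha}\big)\ge 1-\alpha .
\]
I will take the quantile convention to be this finite-sample-corrected one. This is the same step used for Theorem~\ref{thm:no_false_coverage}, only with the opposite tail.

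The structural step is to show that $R_{n+1}\le\tau^{\nm}_{1-\alpha}$ implies $\mathcal T\subseteq V_{\widehat U^{\nm}}$. Write the generation chain as $U^0\subset U^1\subset\cdots\subset U^{T_G}=G$, a chain of ancestor-closed subgraphs. I will identify the first index $t^\ast$ such that $\mathcal T\subseteq V_{U^{t^\ast}}$. Such an index exists because $U^{T_G}=G$.

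The key claim is that $S(U^{\nm}_{n+1})\le\tau$ forces $S(U^{t^\ast})\le\tau$. This is where I expect the main obstacle. $U^{\nm}$ is defined as the minimal ancestor-closed subgraph containing $\mathcal T$, but it need not itself appear in the chain. In general it only satisfies $U^{\nm}\subseteq U^{t^\ast}$, and monotonicity then runs in the wrong direction.

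My first attempt is to interpret $U^{\nm}$ as chain-relative, i.e.\ the smallest member of the generation chain containing all true nodes, so that $U^{\nm}=U^{t^\ast}$. This matches how calibration scores are computed along the trajectories $\{S(U^t)\}$ in the calibration paragraph. With that reading, the nested property (Definition~\ref{definition:nested_property}, guaranteed e.g.\ by Proposition~\ref{prop:lambda_nested}) gives $S(U^t)\le S(U^{t^\ast})\le\tau$ for all $t\le t^\ast$. Hence Algorithm~\ref{alg:cp-test} never breaks before step $t^\ast$, and it accepts $U^{t^\ast}$ at line~\ref{alg:line:graph}. If the chain ends at $G$ and every score is below $\tau$, I will treat generation as terminating with $\widehat U=G$.

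It remains to check that $\widehat U^{\nm}\supseteq U^{t^\ast}$. The returned $\widehat U$ is the last accepted $U^t$, which has index at least $t^\ast$. Nestedness of the chain then gives $\mathcal T\subseteq V_{U^{t^\ast}}\subseteq V_{\widehat U^{\nm}}$.

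Combining this inclusion of events with the quantile bound yields $\mathbb{P}(\mathcal T\subseteq V_{\widehat U^{\nm}})\ge\mathbb{P}(R_{n+1}\le\tau^{\nm}_{1-\alpha})\ge 1-\alpha$. In the write-up I will state explicitly the chain-relative definition of $U^{\nm}$, and that exchangeability requires computing calibration and test scores along their own generation chains.
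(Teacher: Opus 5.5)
Your proposal is correct and takes the same route as the paper. The paper's no-miss proof simply transfers its no-false template: there, the calibration score is taken at the earliest subgraph along each instance's own generation chain that triggers the event ($t_i^\star$), which is exactly your chain-relative $U^{t^\ast}$, and the same rank-based quantile bound is then applied with the upper tail. Your explicit chain-relative definition and your finite-sample quantile convention spell out what the paper leaves implicit when it says the remaining steps follow ``verbatim''.
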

\begin{remark}
The no-miss coverage guarantee in Theorem~\ref{thm:no_missing_coverage} follows from conformal calibration and is independent of the accuracy of $\FU_\theta$.
The learned function $\FU_\theta$ influences efficiency only: higher accuracy yields more aggressive and precise pruning, removing more false nodes while retaining all true ones under the same coverage rate.
\end{remark}
\section{Experiments}
\label{sec:experiments}

\begin{table*}[!htb]
\centering
\caption{
\textbf{
Empirical coverage and efficiency under the \emph{no-miss} and \emph{no-false} coverage targets on MATH, GSM8K, and QA benchmarks} with different $\alpha \in \{ 0.05, 0.1\}$ values.
For each coverage target and miscoverage level $\alpha$, we report the empirical coverage (targeting $1-\alpha$) and the corresponding efficiency (higher is better).
$\checkmark$ indicates that the empirical coverage satisfies the target level $1-\alpha$ within a tolerance of $0.01$, while $\times$ indicates violation of the coverage requirement. 
For methods that fail to satisfy the coverage requirement ($\times$), the corresponding efficiency values (shown in \textcolor{gray}{gray}) are reported for completeness but are not considered in efficiency comparisons.
Results are reported as mean $\pm$ standard deviation over $100$ independent runs.
It is clear that ITCR consistently achieves valid coverage with the best efficiency across all datasets, coverage targets, and miscoverage levels.
}
\label{tab:combined_no_miss_no_false}
\begin{adjustbox}{width=\textwidth}
\begin{tabular}{llcccccccc}
\toprule
\multirow{2}{*}{\textbf{Dataset}} 
& \multirow{2}{*}{\textbf{Method}} 
& \multicolumn{4}{c}{\textbf{No-miss}} 
& \multicolumn{4}{c}{\textbf{No-false}} \\
\cmidrule(lr){3-6} \cmidrule(lr){7-10}
& 
& \multicolumn{2}{c}{\textbf{$\alpha=0.05$}} 
& \multicolumn{2}{c}{\textbf{$\alpha=0.10$}} 
& \multicolumn{2}{c}{\textbf{$\alpha=0.05$}} 
& \multicolumn{2}{c}{\textbf{$\alpha=0.10$}} \\
\cmidrule(lr){3-4} \cmidrule(lr){5-6}
\cmidrule(lr){7-8} \cmidrule(lr){9-10}
& 
& $\mathrm{Cov}^\nm$ & $\mathrm{Eff}^\nm$(\%) 
& $\mathrm{Cov}^\nm$ & $\mathrm{Eff}^\nm$(\%) 
& $\mathrm{Cov}^\nf$ & $\mathrm{Eff}^\nf$(\%)  
& $\mathrm{Cov}^\nf$ & $\mathrm{Eff}^\nf$(\%)  \\
\midrule
\multirow{5}{*}{\textbf{MATH}}
& CPL  
& $\times$ 0.144$\pm$0.19 & \textcolor{gray}{63.72}$\pm$0.15
& $\times$ 0.405$\pm$0.27 & \textcolor{gray}{43.36}$\pm$0.20
& $\times$ 0.548$\pm$0.07 & \textcolor{gray}{37.04}$\pm$0.16
& $\times$ 0.624$\pm$0.10 & \textcolor{gray}{57.27}$\pm$0.17 \\

& ITCR-MAX
&  $\checkmark$ 0.942$\pm$0.08 & \textbf{11.73}$\pm$0.06
& $\times$ 0.878$\pm$0.13 & \textcolor{gray}{17.15}$\pm$0.09
& $\times$ 0.938$\pm$0.07 & \textcolor{gray}{32.91}$\pm$0.31
& $\checkmark$ 0.942$\pm$0.07 & \textbf{35.72}$\pm$0.32 \\

& ITCR-SUM  
& $\times$ 0.903$\pm$0.20 & \textcolor{gray}{12.58}$\pm$0.21
& $\times$ 0.729$\pm$0.37 & \textcolor{gray}{32.33}$\pm$0.38
& $\checkmark$ 1.000$\pm$0.00 & 0.00$\pm$0.00
& $\checkmark$ 0.991$\pm$0.07 & 1.83$\pm$0.13 \\

& ITCR-AVG 
& $\times$ 0.917$\pm$0.08 & \textcolor{gray}{12.93}$\pm$0.08
& $\times$ 0.858$\pm$0.15 & \textcolor{gray}{17.13}$\pm$0.12
& $\checkmark$ \textbf{0.953}$\pm$0.06 & 16.50$\pm$0.19
& $\checkmark$ 0.945$\pm$0.07 & 22.80$\pm$0.23 \\

& \textbf{ITCR}      
& $\checkmark$ \textbf{0.947}$\pm$0.08 & 3.44$\pm$0.03
& $\checkmark$ \textbf{0.894}$\pm$0.10 & \textbf{6.64}$\pm$0.05
& $\checkmark$ 0.943$\pm$0.08 & \textbf{21.66}$\pm$0.23
& $\checkmark$ \textbf{0.908}$\pm$0.07 & 34.77$\pm$0.16 \\
\midrule

\multirow{5}{*}{\textbf{GSM8K}}
& CPL  
& $\times$ 0.169$\pm$0.05 & \textcolor{gray}{80.35}$\pm$0.07
& $\times$ 0.263$\pm$0.08 & \textcolor{gray}{61.75}$\pm$0.13
& $\times$ 0.853$\pm$0.06 & \textcolor{gray}{18.40}$\pm$0.06
& $\times$ 0.700$\pm$0.09 & \textcolor{gray}{37.49}$\pm$0.12 \\

& ITCR-MAX  
& $\times$ 0.938$\pm$0.04 & \textcolor{gray}{6.50}$\pm$0.03
& $\times$ 0.883$\pm$0.05 & \textcolor{gray}{11.20}$\pm$0.04
& $\checkmark$ 0.981$\pm$0.02 & 11.01$\pm$0.09
& $\checkmark$ 0.949$\pm$0.04 & 25.05$\pm$0.12 \\

& ITCR-SUM  
& $\times$ 0.931$\pm$0.04 & \textcolor{gray}{5.24}$\pm$0.04
& $\times$ 0.864$\pm$0.06 & \textcolor{gray}{10.76}$\pm$0.05
& $\checkmark$ 1.000$\pm$0.00 & 1.00$\pm$0.01
& $\checkmark$ {0.997}$\pm$0.01 & 2.73$\pm$0.03 \\

& ITCR-AVG  
& $\times$ 0.917$\pm$0.05 & \textcolor{gray}{7.10}$\pm$0.04
& $\times$ 0.852$\pm$0.06 & \textcolor{gray}{12.24}$\pm$0.05
& $\checkmark$ 0.981$\pm$0.02 & 11.01$\pm$0.09
& $\checkmark$ {0.957}$\pm$0.03 & 17.25$\pm$0.07 \\

& \textbf{ITCR}  
& $\checkmark$ \textbf{0.945}$\pm$0.04 & \textbf{2.15}$\pm$0.02
& $\checkmark$ \textbf{0.902}$\pm$0.06 & \textbf{4.13}$\pm$0.01
& $\checkmark$ \textbf{0.953}$\pm$0.03 & \textbf{18.06}$\pm$0.06
& $\checkmark$ \textbf{0.921}$\pm$0.04 & \textbf{26.94}$\pm$0.08 \\
\midrule

\multirow{5}{*}{\textbf{QA}}
& CPL  
& $\times$ 0.383$\pm$0.04 & \textcolor{gray}{92.72}$\pm$0.05
& $\times$ 0.442$\pm$0.05 & \textcolor{gray}{78.60}$\pm$0.08
& $\checkmark$ {0.947}$\pm$0.04 & 7.73$\pm$0.05
& $\times$ 0.870$\pm$0.05 & \textcolor{gray}{19.80}$\pm$0.07 \\

& ITCR-MAX  
& $\checkmark$ 0.940$\pm$0.04 & 5.29$\pm$0.04
& $\checkmark$ {0.902}$\pm$0.05 & 9.67$\pm$0.04
& $\checkmark$ {0.943}$\pm$0.04 & 10.26$\pm$0.06
& $\checkmark$ {0.911}$\pm$0.05 & 18.36$\pm$0.08 \\

& ITCR-SUM  
& $\times$ 0.936$\pm$0.05 & \textcolor{gray}{7.00}$\pm$0.07
& $\times$ 0.859$\pm$0.07 & \textcolor{gray}{19.55}$\pm$0.09
& $\checkmark$ {0.970}$\pm$0.02 & 3.66$\pm$0.02
& $\checkmark$ {0.969}$\pm$0.02 & 4.92$\pm$0.02 \\

& ITCR-AVG  
& $\times$ 0.923$\pm$0.06 & \textcolor{gray}{9.55}$\pm$0.07
& $\times$ 0.855$\pm$0.07 & \textcolor{gray}{19.51}$\pm$0.09
& $\checkmark$ {0.955}$\pm$0.04 & 7.39$\pm$0.06
& $\checkmark$ {0.925}$\pm$0.04 & 16.34$\pm$0.07 \\

& \textbf{ITCR} 
& $\checkmark$ \textbf{0.957}$\pm$0.03 & \textbf{6.25}$\pm$0.02
& $\checkmark$ \textbf{0.900}$\pm$0.05 & \textbf{10.29}$\pm$0.03
& $\checkmark$ \textbf{0.948}$\pm$0.04 & \textbf{13.90}$\pm$0.07
& $\checkmark$ \textbf{0.899}$\pm$0.05 & \textbf{25.34}$\pm$0.08 \\
\bottomrule
\end{tabular}
\end{adjustbox}
\end{table*}

\subsection{Datasets}
We evaluate on two standard mathematical reasoning benchmarks, MATH \cite{hendrycks2021measuring} and GSM8K dataset \cite{cobbe2021gsm8k}. 
To evaluate cross-domain generalization, we additionally include a world-knowledge question answering (QA) benchmark \cite{chen2023felm}. 
We leverage existing datasets from \cite{chen2023felm,mohri2024language}, where individual subclaims are annotated for factual correctness, from which we derive subset-level coherent labels according to our ancestor-closure criterion. 
The dataset is randomly split into three disjoint subsets for learning the graph-level factuality uncertainty function, conformal calibration, and testing.
Additional dataset details are provided in Appendix~\ref{lab:data}. \footnote{Code: \url{https://github.com/tinattw/ITCR}}

\subsection{Conformal Prediction Performance}


\textbf{Evaluation Protocol: CP metrics.}
We evaluate the empirical coverage rate and the corresponding efficiency for the two different coverage targets: no-false coverage (defined in eq (\ref{eq:no_false_coverage})) and no-miss coverage (defined in eq (\ref{eq:no_miss_coverage})). 
Let $D_{\mathrm{te}}$ denote the test set. The specific metrics are summarized:

(i) \textbf{No-false coverage}:
The corresponding empirical coverage rate is defined as $\mathrm{Cov}^\nf := \frac{1}{|D_{\mathrm{te}}|} \sum_{i \in D_{\mathrm{te}}} \mathbb{1}\!\left[ V_{\widehat U_i} \subseteq \mathcal T_i \right]$, where $\mathcal T_i$ denotes the set of factually correct nodes in the reasoning graph $G_i$ associated with test instance $i$.
Under the same coverage type, the corresponding efficiency is measured by the average fraction of nodes \emph{retained} in the predicted subgraph as: $\mathrm{Eff}^\nf := \frac{1}{|D_{\mathrm{te}}|} \sum_{i \in D_{\mathrm{te}}} \frac{\left| V_{\widehat U_i} \right|}{\left| V_{U_i} \right|}$, where higher $\mathrm{Eff}^\nf$ values indicate that a larger fraction of nodes in the reasoning graph is \emph{retained} under the no-false constraint.

(ii) \textbf{No-miss coverage}:
Its empirical coverage rate is defined as: $\mathrm{Cov}^\nm := \frac{1}{|D_{\mathrm{te}}|} \sum_{i \in D_{\mathrm{te}}} \mathbb{1}\!\left[ \mathcal T_i \subseteq V_{\widehat U_i} \right]$.
Under the same coverage type, the corresponding efficiency is measured by the average fraction of nodes \emph{removed} in the predicted subgraph as: $\mathrm{Eff}^\nm := \frac{1}{|D_{\mathrm{te}}|} \sum_{i \in D_{\mathrm{te}}} \frac{\left| V_{U_i} \setminus V_{\widehat U_i} \right|} {\left| V_{U_i} \right|}$, where higher $\mathrm{Eff}^\nm$ values indicate that a larger fraction of nodes in the reasoning graph is \emph{removed} under the no-miss constraint.

\textbf{Evaluation Protocol: CP Baselines.}
We consider inference-time CP methods that operate during generation, as post-hoc pruning and inference-time generation correspond to different evaluation settings.
Specifically, we include: 
(i) \textbf{CPL} \cite{mohri2024language}, which estimates factuality risk at the individual-claim level using LLMs.
To isolate the contribution of our learned graph-level factuality uncertainty function, we further evaluate several variants of our methods that only replace our learned function with heuristic aggregation rules over node-level factuality uncertainty scores:
(ii) \textbf{ITCR-MAX}, which uses the maximum node-level factuality uncertainty scores as the graph-level score;
(iii) \textbf{ITCR-SUM}, which uses the sum of node-level factuality uncertainty scores;
(iv) \textbf{ITCR-AVG}, which uses the average node-level factuality uncertainty scores.
In all experiments, \(\lambda\) is initialized at a high empirical quantile of \(\kappa(X)\) estimated from the corresponding calibration data.
All methods are evaluated under the same experiment settings and conducted on a server equipped with NVIDIA A100 GPUs.
We repeat experiments for all methods over $100$ independent runs, with randomness arising from different splits of the calibration and test sets.

\begin{figure}[!htb]
    \centering
    \begin{minipage}{0.48\linewidth}
    \centering
    \textbf{(a)} No-false Coverage
    \includegraphics[width = \linewidth]{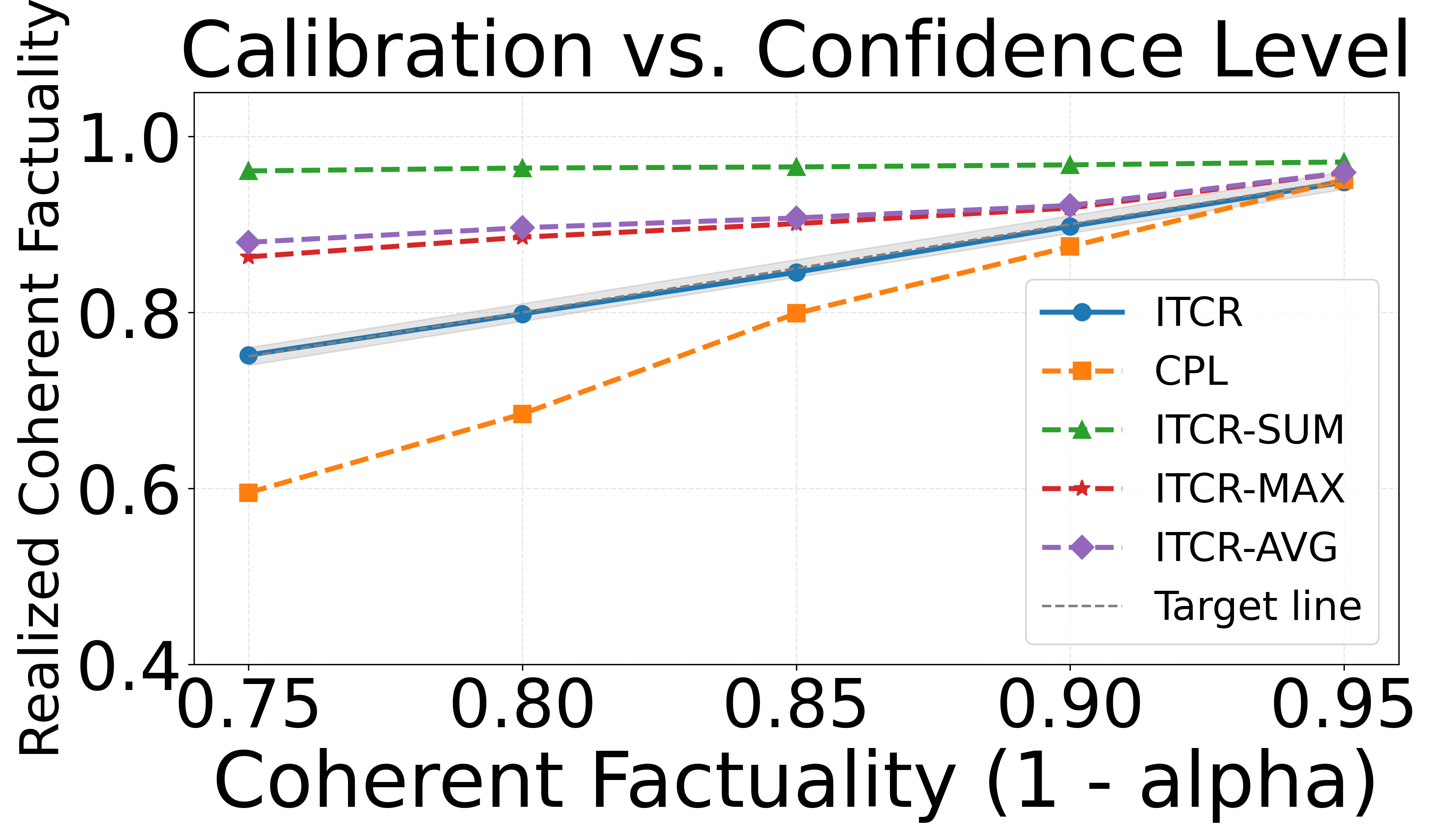}
    \\
    \textbf{(c)} No-false Efficiency
    \includegraphics[width=\linewidth]{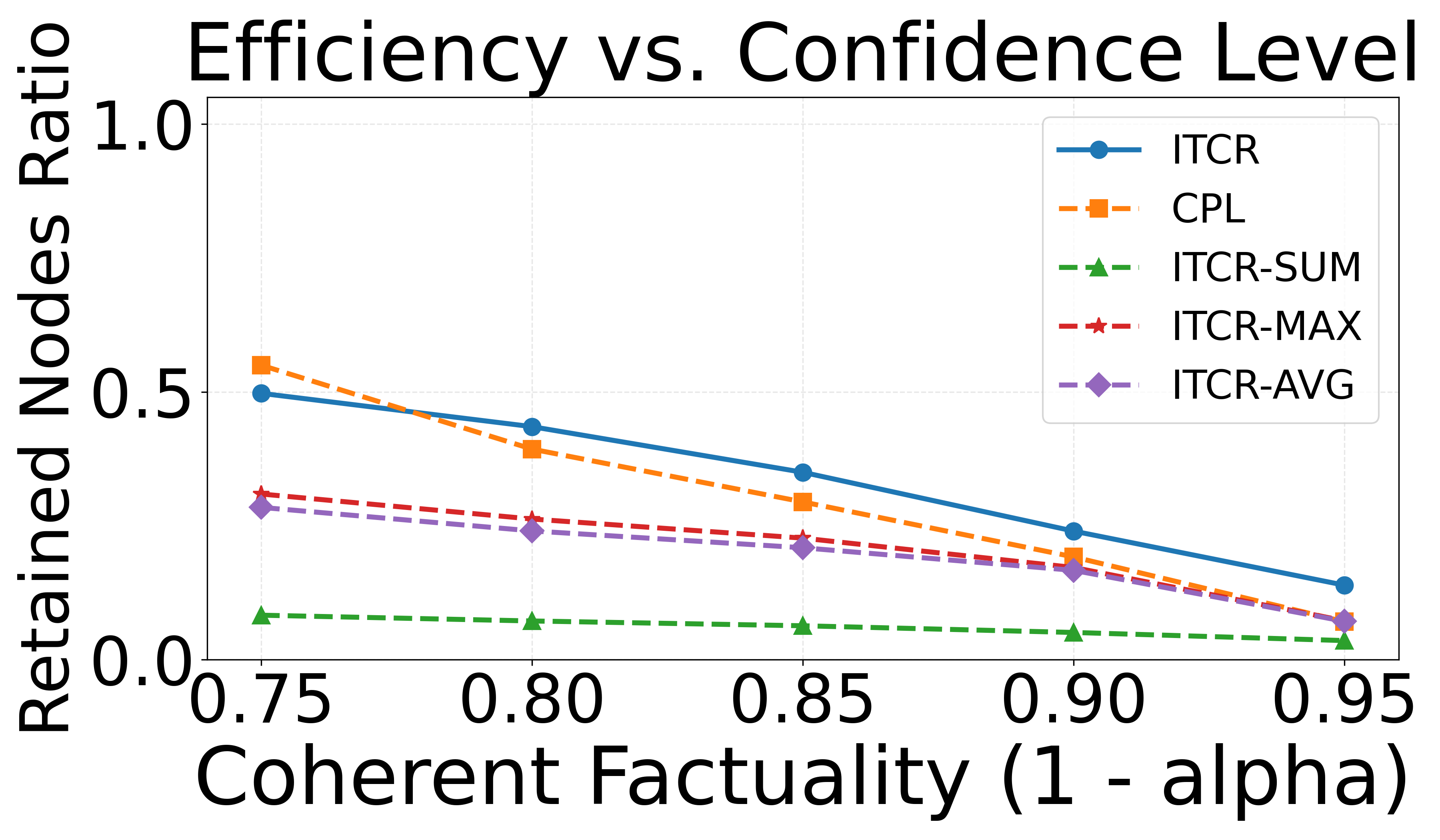}
    \end{minipage} 
    \begin{minipage}{0.48\linewidth}
    \centering
    \textbf{(b)} No-miss Coverage
    \includegraphics[width = \linewidth]{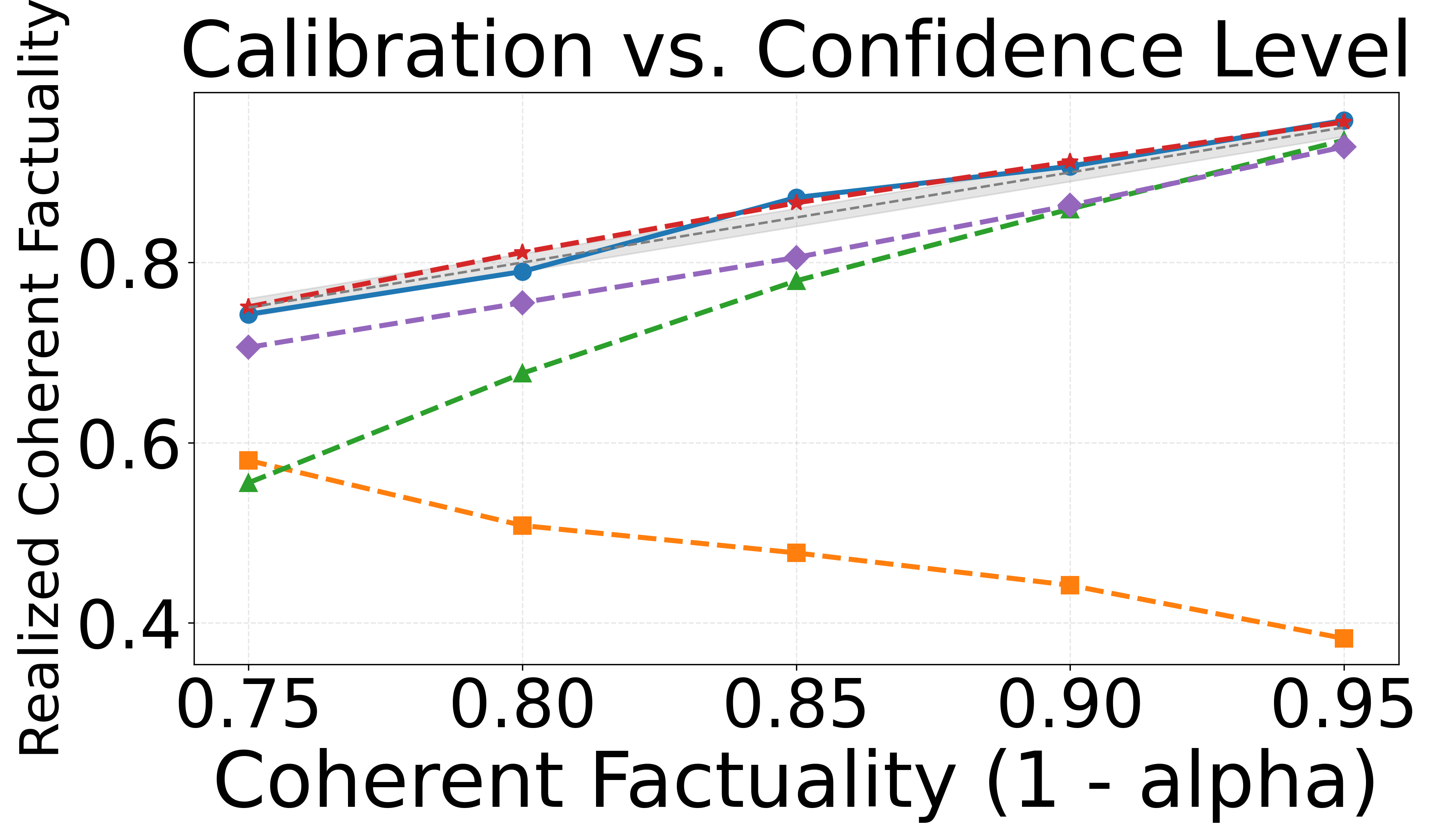}
    \\
    \textbf{(d)} No-miss Efficiency
    \includegraphics[width=\linewidth]{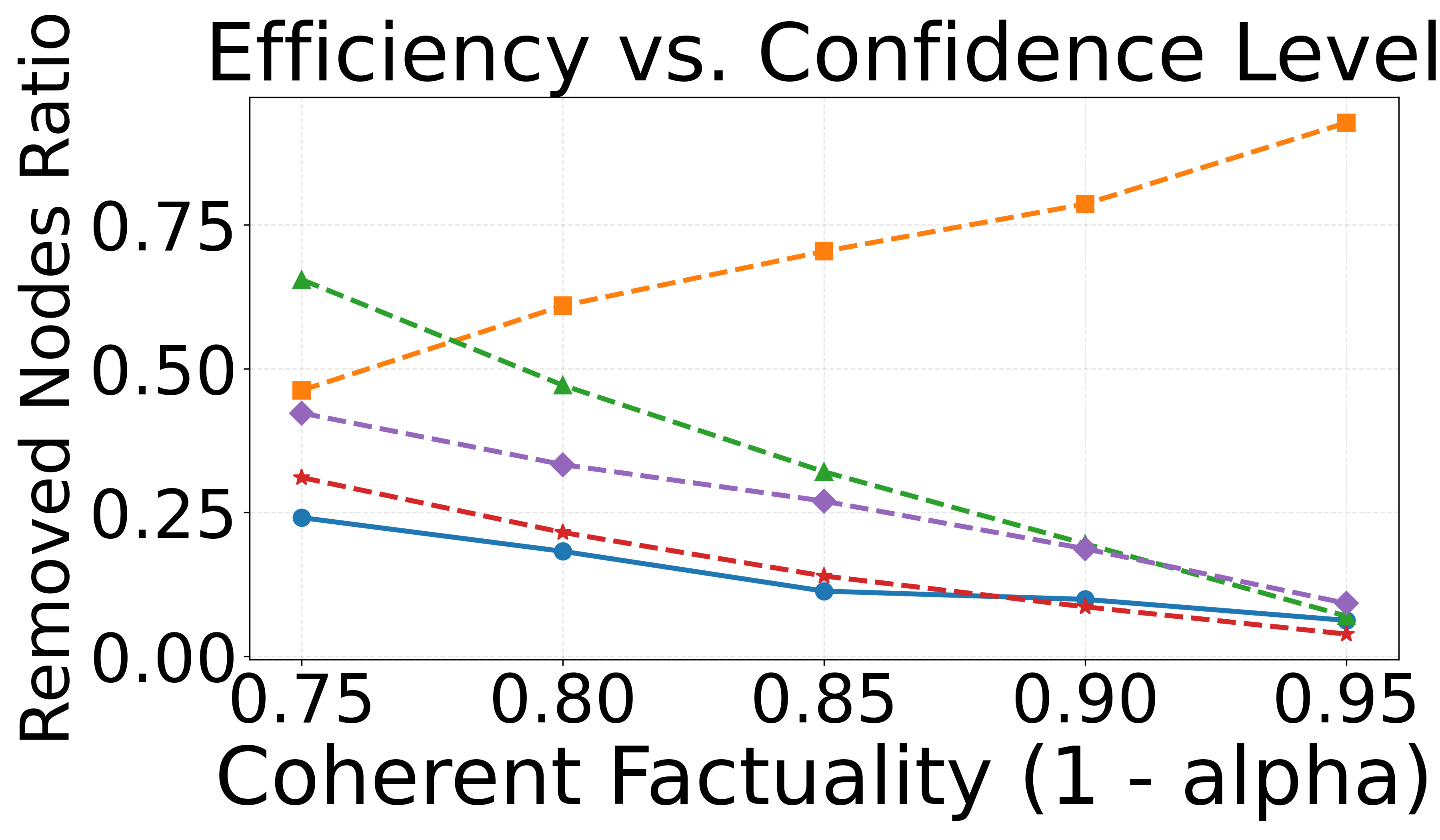}
    \end{minipage} 
    \caption{
    \textbf{Coverage and efficiency across target confidence levels \(1-\alpha\) under no-false and no-miss objectives} on QA dataset.
    Subfigure \textbf{(a) and (b)} report the achieved empirical coverage as a function of the target coverage, with the dashed gray line indicating the ideal calibration line.
    Subfigure \textbf{(c) and (d)} report the corresponding efficiency, measured by the retained node ratio, where lower values indicate more compact reasoning subgraphs.
    ITCR remains close to the target calibration line and exhibits higher efficiency with valid coverage than baselines across target coverage.
    }
    \label{fig:main_cov_eff}
\end{figure}


\textbf{Results: ITCR consistently achieves valid coverage with the best efficiency across all datasets, coverage targets, and miscoverage levels.}
Table~\ref{tab:combined_no_miss_no_false} summarizes the empirical coverage and efficiency of ITCR and its variants under both no-miss and no-false objectives.
Across all settings, ITCR satisfies the target coverage guarantees, while its variants based on simple graph-level aggregations (MAX, SUM, AVG) achieve competitive performance but exhibit less stable validity.
This highlights the benefit of learning a principled graph-level factuality uncertainty function rather than relying on heuristic aggregation.
Among all methods that satisfy coverage, ITCR attains empirical coverage closest to the target level \(1-\alpha\) while achieving the highest efficiency, demonstrating a favorable trade-off between coverage guarantee and inference-time efficiency.

\textbf{Results: ITCR consistently achieves valid coverage with near-tight efficiency across target coverage levels.}
Fig.~\ref{fig:main_cov_eff} examines the coverage-efficiency trade-offs as the target coverage level \(1-\alpha\) varies from $0.75$ to $0.95$ with a range $0.05$.
Across all target coverage levels, ITCR consistently satisfies both no-miss and no-false coverage guarantees, while its empirical coverage closely tracks the target coverage \(1-\alpha\).
At the same time, ITCR achieves the highest efficiency among all valid methods, with efficiency degrading smoothly as target coverage increases, reflecting near-tight inference-time generation rather than conservative over-coverage.

\begin{figure}[!htb]
    \centering
    \begin{minipage}{0.49\linewidth}
    \centering
    \textbf{(a)} Distribution of \(\kappa(X)\)
    \includegraphics[width = \linewidth]{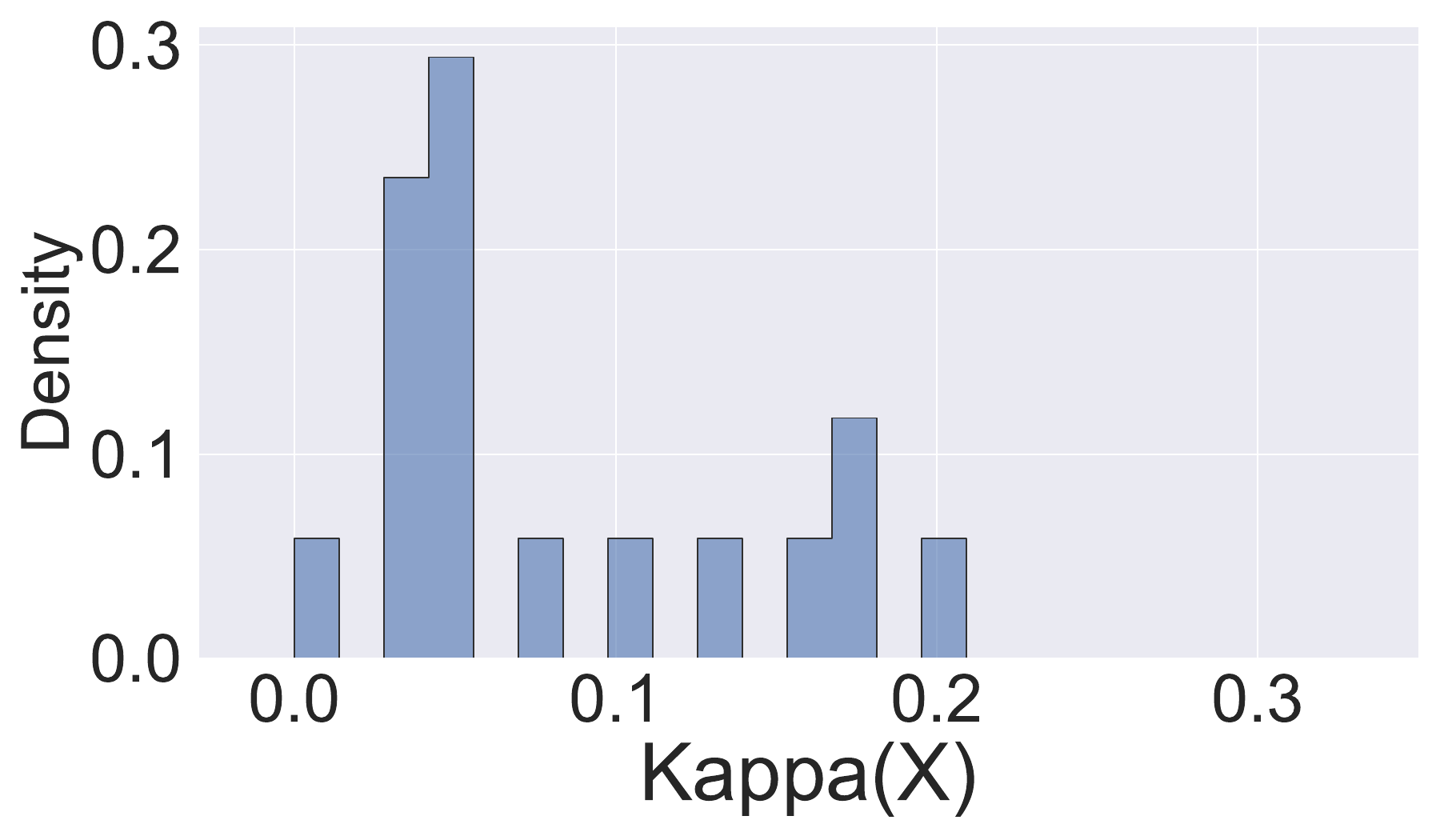}
    \end{minipage} 
    \begin{minipage}{0.49\linewidth}
    \centering
    \textbf{(b)} Violation rate across $\lambda$
    \includegraphics[width = \linewidth]{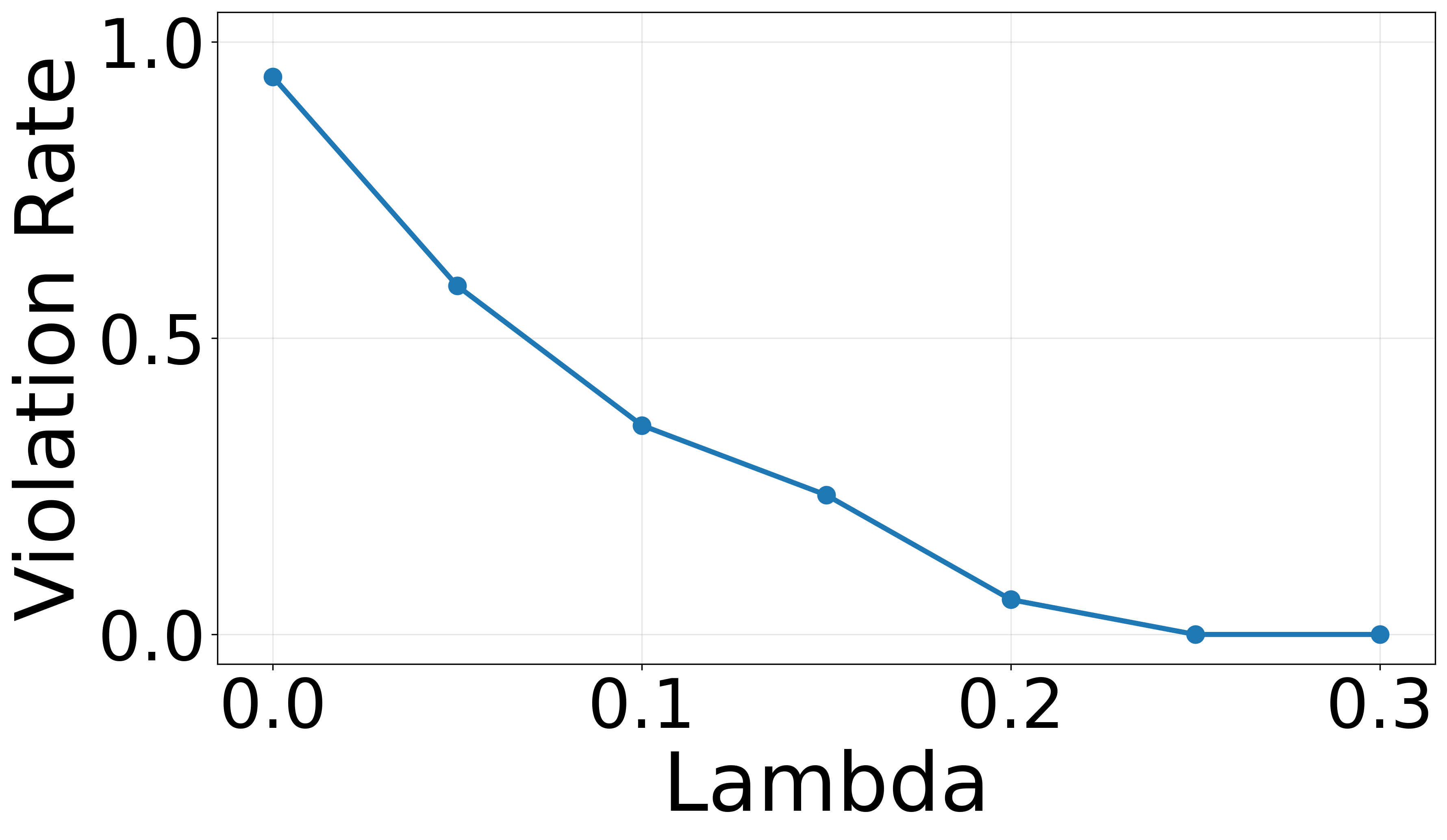}
    \end{minipage}  
    \caption{
\textbf{Empirical validation of the nestedness condition in Proposition~\ref{prop:lambda_nested}} on MATH dataset.
\textbf{(a)} Distribution of $\kappa(X)$ on the calibration set, showing that the slope bound is finite and can be estimated from data.
\textbf{(b)} Violation rate of the nested property as a function of $\lambda$.
As predicted by Proposition~\ref{prop:lambda_nested}, increasing $\lambda$ beyond the empirical scale of $\kappa(X)$ drives the violation rate to $0$.
}
    \label{fig:kappa}
\end{figure}


\textbf{Results: ITCR is invariant to the choice of uncertainty function class for coverage.}
Throughout our experiments, we use a multilayer perceptron (MLP) to instantiate the graph-level factuality uncertainty function (defined in Eq \eqref{eq:fu_function}). See Appendix~\ref{app:fu} for implementation details.
We evaluate the sensitivity of ITCR to the choice of the graph-level factuality uncertainty function by replacing the MLP with alternative mappings (random forest and SVM).
As shown in Table~\ref{tab:ablation_mapping}, all variants satisfy the target coverage guarantees under both no-false and no-miss objectives, confirming that coverage validity is model-agnostic with respect to the uncertainty function class.
Efficiency differs across mappings, with the MLP-based ITCR achieving the highest efficiency.
This suggests that modeling capacity primarily affects efficiency rather than coverage validity.

\begin{table}[!htb]
\centering
\caption{
\textbf{Sensitivity analysis of the function class for the graph-level factuality uncertainty function (defined in Eq \eqref{eq:fu_function})} at \(\alpha = 0.1\) on QA.
Empirical coverage and efficiency are reported under both no-false and no-miss targets for different choices of the mapping classes, demonstrating that ITCR achieves valid coverage consistently across function classes.
}
\label{tab:ablation_mapping}
\begin{adjustbox}{width=\columnwidth}
\begin{tabular}{lcccc}
\toprule
\multirow{2}{*}{\textbf{Method}} 
& \multicolumn{2}{c}{\textbf{No-false}} 
& \multicolumn{2}{c}{\textbf{No-miss}} \\
\cmidrule(lr){2-3} \cmidrule(lr){4-5}
& Cov. & Eff.(\%) $\uparrow$
& Cov. & Eff.(\%) $\uparrow$ \\
\midrule
ITCR-rf
& $\checkmark$ 0.911 & 16.45
& $\checkmark$ 0.900 & 10.20 \\

ITCR-svm
& $\checkmark$ 0.893 & 18.71
& $\checkmark$ 0.905 & 6.85 \\

ITCR
& $\checkmark$ 0.899 & 25.34
& $\checkmark$ 0.900 & 10.29 \\
\bottomrule
\end{tabular}
\end{adjustbox}
\end{table}

\textbf{Results: The nestedness condition in Proposition~\ref{prop:lambda_nested} is attainable in practice.}
We assess the applicability of Proposition~\ref{prop:lambda_nested} by estimating $\kappa(X)$ from the calibration set.
Fig.~\ref{fig:kappa}(a) shows that the values are finite, indicating that the slope bound $\kappa$ in Proposition~\ref{prop:lambda_nested} is estimable in practice.
We then validate the implication of the proposition by examining the violation rate of the nested property as a function of $\lambda$.
Fig.~\ref{fig:kappa}(b) shows that the violation rate decreases monotonically and vanishes once $\lambda$ exceeds the empirical scale of $\kappa(X)$.

\subsection{Downstream Reasoning Performance}


\textbf{Evaluation Protocol: LLMs.}
We evaluate downstream reasoning performance across multiple LLMs with different model sizes and reasoning configurations.
Specifically, we consider three models: (i) LLaMA-3.1-8B-Instruct \cite{grattafiori2024llama}; (ii) Qwen3-4B-Thinking-2507 \cite{yang2025qwen3}; and (iii) DeepSeek-R1-Distill-Qwen-1.5B \cite{guo2025deepseek}.
For a fair comparison, all models are evaluated on the GSM8K benchmark under the same experimental protocol.
We fix the target confidence level at $80\%$ and evaluate on $500$ randomly sampled test questions for LLaMA, and on a randomly selected subset of $50$ questions for Qwen and DeepSeek.
All decoding and reasoning configurations follow the default settings of each backbone.





\begin{table}[!htb]
\centering
\caption{
\textbf{Comparison of ITCR and PostCal across different LLM backbones} on GSM8K and MATH datasets. See Appendix~\ref{app:prompts} for implementation details.
All metrics are reported in percentage (\%), and the better result in each pair is highlighted in \textbf{bold}.
ITCR consistently yields larger net gains (PCR--NCR) over baselines by $18.77\%$ on average, across both benchmarks.
}
\label{tab:pairwise_pcr_ncr}
\begin{adjustbox}{width=\columnwidth}
\begin{tabular}{cl cc cc cc}
\toprule
& & \multicolumn{2}{c}{\textbf{LLaMA-8B}} 
  & \multicolumn{2}{c}{\textbf{Qwen3-4B}} 
  & \multicolumn{2}{c}{\textbf{DS-1.5B}} \\
\cmidrule(lr){3-4} \cmidrule(lr){5-6} \cmidrule(lr){7-8}
\textbf{Dataset} & \textbf{Metric}
& PostCal & ITCR
& PostCal & ITCR
& PostCal & ITCR \\
\midrule

\multirow{3}{*}{\rotatebox[origin=c]{0}{GSM8K}}
& PCR$\uparrow$
& 12.35 & \textbf{43.21}
& 33.33 & \textbf{37.04}
& 11.76 & \textbf{58.82} \\
& NCR$\downarrow$
& 70.43 & \textbf{16.34}
& 39.13 & \textbf{17.39}
& 75.76 & \textbf{45.45} \\
& PCR--NCR$\uparrow$
& -58.08 & \textbf{26.87}
& -5.80  & \textbf{19.65}
& -64.00 & \textbf{13.37} \\
\midrule

\multirow{3}{*}{\rotatebox[origin=c]{0}{MATH}}
& PCR$\uparrow$
& 8.33  & \textbf{20.83}
& 4.88  & \textbf{43.90}
& 8.33  & \textbf{16.67} \\
& NCR$\downarrow$
& 76.92 & \textbf{3.85}
& 44.44 & \textbf{22.22}
& 50.00 & \textbf{2.63} \\
& PCR--NCR$\uparrow$
& -68.59 & \textbf{16.98}
& -39.56 & \textbf{21.68}
& -41.67 & \textbf{14.04} \\
\bottomrule
\end{tabular}
\end{adjustbox}
\end{table}





\textbf{Evaluation Protocol: LLM Metrics.}
We evaluate answer-level correction performance using conditional metrics defined over the original model answer, the conformalized answer, and a reference answer obtained via randomized rethinking.
Let $a'$ be the conformalized answer and $a_{\mathcal M}$ denote the original model one, where $a_{\mathcal M}=0$ and $a_{\mathcal M}=1$ means incorrect or correct, respectively.
Answer correctness is evaluated against a reference answer $a_{\mathrm{ref}}$, following prior work \cite{creswell2022selection,wang2022self,lightman2023let}.
We report the following correction metrics.

(i)
The \textbf{positive correction ratio (PCR)} measures the fraction of originally incorrect answers that are corrected after conformalization, defined as $\mathrm{PCR}~:=~ \mathbb{E}\!\left[ \mathrm{Acc}(a_{\mathrm{ref}}, a') ~\middle|~ \mathrm{Acc}(a_{\mathrm{ref}}, a_{\mathcal M}) = 0
\right]$.

(ii)
The \textbf{negative correction ratio (NCR)} measures the fraction of originally
correct answers that become incorrect after conformalization, defined as $\mathrm{NCR}~:=~ \mathbb{E}\!\left[ 1 - \mathrm{Acc}(a_{\mathrm{ref}}, a')~\middle|~\mathrm{Acc}(a_{\mathrm{ref}}, a_{\mathcal M}) = 1\right]$.

(iii) The effect of applying CP on LLM reasoning is summarized by \textbf{correction gain}, defined as $\mathrm{PCR} - \mathrm{NCR}$.

\textbf{Experimental Design.}
We evaluate downstream answer correction performance under an inference-time control setting, where the model may invoke additional reasoning steps to deliberately refine its solution~\cite{wu2025thinking}.
The original answer $a_{\mathcal M}$ is produced using a thinking policy that stochastically invokes refine-thinking steps with a fixed probability.
We compare two conformalized strategies: (i) PostCal \cite{rubin2025conformal}, which performs post-hoc CP after a full reasoning trajectory is generated, and (ii) ITCR, which integrates CP into inference-time reasoning.

\textbf{Result: Inference-time conformal reasoning consistently improves downstream answer correction across LLM backbones.}
As shown in Table~\ref{tab:pairwise_pcr_ncr},  ITCR consistently achieves positive correction gain (PCR--NCR) across all LLM backbones with average $18.77\%$ improvements, whereas PostCal exhibits negative or substantially lower gains. 
This improvement can be attributed to the ability of inference-time calibration to identify high-risk reasoning steps early and focus re-thinking on these steps, thereby avoiding unnecessary or harmful reasoning and improving the effectiveness of self-correction.

\begin{table}[!htb]
\centering
\caption{
\textbf{Average token usage and runtime comparison} between ITCR and PostCal across different LLM backbones on GSM8K.
ITCR incurs lower token consumption and inference time.
}
\label{tab:efficiency}
\begin{adjustbox}{width=\columnwidth}
\begin{tabular}{llcc}
\toprule
\textbf{LLM Backbone}
& \textbf{Method}
& \textbf{Avg.\ Token}$\downarrow$
& \textbf{Avg.\ Runtime (s)}$\downarrow$ \\
\midrule

\multirow{2}{*}{LLaMA-3.1-8B-Instruct}
& PostCal       & 2092.80 & 67.69 \\
& \textbf{ITCR} & \textbf{1919.50} & \textbf{62.40} \\
\midrule

\multirow{2}{*}{Qwen3-4B-Thinking-2507}
& PostCal       & 1975.56 & 64.82 \\
& \textbf{ITCR} & \textbf{1812.92} & \textbf{55.04} \\
\midrule

\multirow{2}{*}{DeepSeek-R1-Distill-Qwen-1.5B}
& PostCal       & 2578.42 & 87.07 \\
& \textbf{ITCR} & \textbf{2559.58} & \textbf{83.94} \\
\bottomrule
\end{tabular}
\end{adjustbox}
\end{table}

\textbf{Result: ITCR uses fewer tokens and has lower runtime than PostCal across all backbones.} 
Table \ref{tab:efficiency} reports end-to-end computational cost measurements on GSM8K dataset. Both ITCR and PostCal share the same graph-construction and scoring pipeline; they differ only in whether conformal control is applied at inference time or post hoc. We observe that ITCR is cheaper than the post-hoc alternative, indicating that its correction gains do not come at the expense of higher computational cost.

\begin{table}[!htb]
\centering
\caption{
\textbf{Ablation study on the no-false and no-miss objectives} in ITCR with LLaMA-3.1-8B-Instruct. We evaluated the 59{/}500 GSM8K cases whose correct prefix contains $\le 3$ nodes.
}
\label{tab:trade_off}
\begin{adjustbox}{width=\columnwidth}
\begin{tabular}{llccc}
\toprule
\textbf{Dataset}
& \textbf{Method}
& \textbf{PCR}$\uparrow$
& \textbf{NCR}$\downarrow$
& \textbf{PCR--NCR}$\uparrow$ \\
\midrule
\multirow{2}{*}{GSM8K}
& 
No-false & \textbf{30.00} & 27.59 & 2.41 \\
& 
No-miss  & 26.67 & \textbf{20.69} & \textbf{5.98} \\
\bottomrule
\end{tabular}
\end{adjustbox}
\end{table}

\textbf{Result: No-miss objective achieves a reasonable trade-off against over-truncation.}
The output of the no-miss objective is intentionally recall-conservative: it may retain a few incorrect nodes to avoid removing factually correct steps needed downstream. At $\alpha=0.1$, it retains about $0.5$ incorrect nodes per graph on average ($0.54/0.50/0.56$ on GSM8K/MATH/QA), with incorrect-node fractions of $11.7 \%/9.1\%/25.4 \%$. Thus, the retained noise is limited in practice and reflects the intended trade-off against over-truncation.
As shown in Table \ref{tab:trade_off}, the no-miss objective achieves a higher correction gain (PCR--NCR) on these heavily truncated cases. This supports its intended role: when errors occur early, tolerating limited residual uncertainty helps preserve downstream performance. See additional experimental results and analysis in Appendix~\ref{appendix:additional_results}.

\section{Related Work}

\textbf{Conformal Prediction.}
Conformal Prediction (CP) is a powerful uncertainty quantification (UQ) framework initially proposed in \cite{vovk2005algorithmic}, which provides prediction sets for variables of interest with statistical guarantees. The main idea of CP is to substitute a model's point estimates with prediction sets for quantifying model uncertainty \cite{TOCCACELI2022108507, f48cda1b853049e9b64a4dc114b6b5ca}. Recent studies have introduced various conformity measures for both regression tasks \cite{papadopoulos2008normalized, papadopoulos2011regression} and classification tasks \cite{MALTOUDOGLOU2022108271}. Compared to other CP frameworks, such as cross-validation \cite{vovk2012crossconformalpredictors} or jackknife \cite{barber2020predictiveinferencejackknife}, many methods adopt the split approach \cite{papadopoulos2002inductive,Lei2013ACP}, which enables the development of faster and more scalable CP algorithms. 
However, CP usually suffers inefficiency when there are no well-calibrated probabilities, with the intuition that larger prediction sets cover higher uncertainty. 
Much of the recent focus in the CP community has been on achieving desirable efficiency beyond validity \cite{sadinle2019least, romano2020classification, shi2024conformal, wang2025enhancing}. 
In addition to foundational works, CP techniques were previously extended and applied in the language domain, which are expected to not only provide accurate answers but also to let NLP systems ``know when they do not know'' \cite{campos2024conformal}. Despite the widespread use of CP methods in several NLP tasks such as text classification \cite{zhan2022reliably, giovannotti2022calibration} and sequence tagging \cite{dey2022conformal}, there is little work on their application to language generation tasks.

\textbf{Conformal Language Model.}
Several works have explored conformal language modeling (CLM) in settings with bounded or discretized output spaces, including question answering \cite{li-etal-2024-traq, zhang2020less, kumar2023conformal, rouzrokh2024conflare} and token-level prediction \cite{ravfogel-etal-2023-conformal, ulmer-etal-2024-non}. 
Extending CLM to large language models (LLMs) with genuinely unbounded output spaces remains challenging, as classical conformal methods implicitly require enumerating or approximating the output space. 
To address this, \cite{wang-etal-2024-conu} selects prediction sets by grouping generations with similar uncertainty, but its effectiveness depends on strong alignment between nonconformity scores and correctness criteria, while \cite{quach2023conformal,shahrokhi2025conformal} calibrates a stochastic sampling process to generate acceptable responses from the infinite output space.
To obtain conformal factual guarantees for open-ended generation, \cite{mohri2024language} decomposes outputs into claims and applies post-hoc filtering to remove low-confidence claims, leveraging claim-level fact verification \cite{factscore}. 
Subsequent work improves information retention by rewriting underspecified claims \cite{jiang2025conformal} or introducing conditional guarantees \cite{cherian2024large}. 
However, these approaches operate primarily at the claim level and can be brittle in reasoning domains, where correctness depends on the validity of preceding steps and locally correct claims may still be invalid. 
To address this, \cite{rubin2025conformal} enforces coherent factuality by applying conformal prediction to reasoning graphs and evaluating entire claim orderings. 
Nevertheless, their method relies on post-hoc calibration and pruning over fully generated reasoning graphs, preventing inference-time generation with risk control and making it sensitive to hallucination-induced noise, which can lead to unstable thresholds and reduced efficiency.




\section{Limitations and Future Directions}
\label{sec:limitations}

ITCR inherits the standard exchangeability assumption of split conformal prediction. 
When the calibration and test distributions differ, for example, due to changes in
task domains, prompts, model backbones, or retrieval environments, the calibrated threshold may no longer achieve the target coverage. Extending ITCR to such
non-exchangeable settings through localized calibration \cite{guan2023localized},
importance reweighting \cite{barber2023conformal}, or online recalibration is an
important future direction.

Another limitation is that the guarantee is defined with respect to the constructed
reasoning graph and its factuality labels. Imperfect claim decomposition,
dependency extraction, or factuality verification may affect how well the calibrated
guarantee reflects semantic factuality in the original reasoning trace. Future work
could account for uncertainty in graph construction and develop more adaptive
nestedness penalties to improve the efficiency--robustness trade-off during
inference-time stopping.

\section{Conclusion}
\label{sec:conclusion}

In this paper, we proposed \emph{Inference-Time Conformal Reasoning (ITCR)}, a framework for calibrated factuality control during structured LLM reasoning. 
ITCR integrates conformal calibration into the generation process itself, using graph-level factuality uncertainty and a nested non-conformity score to decide when to stop expanding a reasoning graph. 
This yields valid coverage guarantees for structural factuality under no-false and no-miss objectives. 
Experiments across multiple datasets and LLM backbones show that ITCR achieves empirically valid coverage, improves the coverage--efficiency trade-off, and produces more accurate downstream reasoning by $18.77\%$ on average.


\section*{Impact Statement}

This paper proposes a statistical framework for inference-time uncertainty control in structured LLM reasoning.
Its potential positive impact lies in improving the reliability and safety of multi-step reasoning by providing calibrated factuality guarantees.
As with other advances in large language models, the same techniques could be misused or over-relied upon in sensitive settings, and responsible deployment with appropriate human oversight remains essential.
The work is methodological in nature and does not target any specific application domain.

\section*{Acknowledgments}

Huan Zhang is supported in part by the AI2050 program at Schmidt Sciences (AI2050 Early Career Fellowship). 
Yan Yan is supported by the USDA-NIFA funded AgAID Institute award 2021-67021-35344, and the NSF grant CNS-2312125, IIS-2443828, DUE-2519063. 
The views expressed are those of the authors and do not reflect the official policy or position of the USDA-NIFA and NSF.

\bibliography{reference}
\bibliographystyle{icml2026}

\newpage
\appendix
\onecolumn

\section{Technical Proofs}
\label{appendix:sec:tech_proof}

In this section, we prove Proposition \ref{prop:lambda_nested},  Theorem \ref{thm:no_false_coverage} and \ref{thm:no_missing_coverage}.

\subsection{Proof of Proposition \ref{prop:lambda_nested}}
\label{appendix:subsec:tech_proof_prop}

\begin{appendix_proposition}
(Proposition \ref{prop:lambda_nested} restated, condition for nested monotonicity).
Let $B(U)=1-\sigma(\FU^\theta(U,\{\fu(v)\}_{v\in V_U}))$.
For each input $X \sim P_X$, consider all ancestor-closed expansions
$U \subset U' \subseteq G_X$, and define
\[
\kappa
~:=~
\sup_{X\sim P_X}\;
\sup_{U\subset U' \subseteq G_X}
\frac{\big(B(U)-B(U')\big)^+}{|V_{U'}|-|V_U|}.
\]
If $\lambda \ge \kappa$, then the non-conformity score
$S(U)$ satisfies the nested property.
\end{appendix_proposition}

\begin{proof}
(of Proposition \ref{prop:lambda_nested})

Recall $S(U)=B(U)+\lambda |V_U|$. Then
\begin{align}
\label{eq:proof_proposition_1}
S(U')-S(U)
&= \big(B(U')+\lambda |V_{U'}|\big) - \big(B(U)+\lambda |V_U|\big) 
\nonumber \\
& = \lambda\big(|V_{U'}|-|V_U|\big) - \big(B(U)-B(U')\big)
\nonumber \\
& \geq \lambda\big(|V_{U'}|-|V_U|\big) - \big(B(U)-B(U')\big)^+, 
\end{align}
where the inequality follows from the fact that
$-(a) \ge -(a)^+$ for any $a \in \mathbb{R}$.

Recall that $\kappa
=
\sup_{X \sim \mathcal{P}_X}
\sup_{U \subset U' \subseteq G_X}
\frac{\big(B(U)-B(U')\big)^+}{|V_{U'}|-|V_U|}$, thus $\big(B(U)-B(U')\big)^+ \leq \kappa \big( |V_{U'}|-|V_U|\big)$ for any $U \subset U' \subseteq G_X$, where $X \sim \mathcal{P}_X$.

Thus, we have:
\begin{align}
\label{eq:proof_proposition_2}
\lambda\big(|V_{U'}|-|V_U|\big) - \big(B(U)-B(U')\big)^+ 
\geq \lambda\big(|V_{U'}|-|V_U|\big) - \kappa \big(|V_{U'}|-|V_U| \big)
= \big ( \lambda - \kappa \big )\big(|V_{U'}|-|V_U|\big).
\end{align}

Combining Eq \eqref{eq:proof_proposition_1} and \eqref{eq:proof_proposition_2}, we have:
\begin{align*}
S(U')-S(U)
\geq \lambda\big(|V_{U'}|-|V_U|\big) - \big(B(U)-B(U')\big)^+
\geq \big ( \lambda - \kappa \big )\big(|V_{U'}|-|V_U|\big).
\end{align*}

If $\lambda \ge \kappa$, then $S(U') \ge S(U)$ for any
$U \subset U'$.

Applying this monotonicity to each adjacent pair
$(U^t,U^{t+1})$ along any inference-time trajectory, yielding $S(U^1)\le \cdots \le S(U^T)$, which is Definition~\ref{definition:nested_property}.

Thus, we finish the proof of Proposition \ref{prop:lambda_nested}.

\end{proof}

\subsection{Proof of Theorem \ref{thm:no_false_coverage}}
\label{appendix:subsec:tech_proof_thm_nf}

\begin{appendix_theorem}
(Theorem \ref{thm:no_false_coverage} restated, no-false coverage guarantee).
Suppose $S(\cdot)$ satisfies the nested property.
Let $\widehat U^\nf$ denote the output of Algorithm~\ref{alg:cp-test} with threshold $\tau^\nf_{\alpha}$.
Then the inference procedure achieves no-false coverage at level $1-\alpha$, i.e.,
$\mathbb{P}\!\left(V_{\widehat U^\nf}\subseteq \mathcal T\right)\ge 1-\alpha$.
\end{appendix_theorem}

\begin{proof}
(of Theorem \ref{thm:no_false_coverage})

For simplification, we use $\widehat U$ to replace $\widehat U^\nf$ in this proof.

Then, we transfer the coverage $\mathbb{P} \left(V_{\widehat U}\subseteq \mathcal T\right)\ge 1-\alpha$ in Theorem \ref{thm:no_false_coverage} into its equivalent false-positive-rate (FPR) form: 
\begin{align}
\label{eq:proof_theorem_nf_1}
\mathbb{P} \left(V_{\widehat U}\subseteq \mathcal T\right)\ge 1-\alpha
\Longleftrightarrow
\mathbb{P} \left(V_{\widehat U}\nsubseteq \mathcal T\right)\le \alpha.
\end{align}

Recall that $\{G_i\}_{i=1}^n$ be the calibration set.
For each $G_i$, the fixed expansion rule produces a nested sequence of ancestor-closed subgraphs $U^{0}_i \subset U^{1}_i \subset \cdots \subset U^{T_{G_i}}_i$,

Define the earliest-bad index $t_i^\star := \min\{t:\; V_{U_{i}^t}\nsubseteq \mathcal T\}$ with the convention $t_i^\star=+\infty$ if no such index exists.
Correspondingly, define the score $Z_i :=S\left(U_i^{t_i^\star}\right)$ for each calibration sample with $t_i^\star<\infty$. 

Let $\mathcal I_{\mathrm{bad}}:=\{i:\ t_i^\star<\infty\}$ and $m:=|\mathcal I_{\mathrm{bad}}|$.
Define the threshold $\tau_\alpha$ as the $\lfloor \alpha(m+1)\rfloor$-th largest value of $\{Z_i\}_{i\in\mathcal I_{\mathrm{bad}}}$. 

Next, we analyze the output of Algorithm \ref{alg:cp-test} during test-time. 
For a test input $X_{n+1}$, let $U^{0}_{n+1} \subset U^{1}_{n+1} \subset \cdots \subset U^{T_{G_{n+1}}}_{n+1}$ be the nested subgraphs generated by the same expansion rule.
Algorithm~\ref{alg:cp-test} returns $\widehat U_{n+1}:=U^{\hat t}_{n+1}$, where $\hat t := \max\{t:\ S(U^t_{n+1})\le \tau_\alpha\}$.

Assume that for any input $G$, along its expansion trajectory $U^0_G \subset U^1_G \subset \cdots \subset U^{T_G}_G$, the nonconformity score satisfies $S(U^0_G) \le S(U^1_G) \le \cdots \le S(U^{T_G}_G)$. 

Define $t^\star := \min\{t:\ V_{U^t_{n+1}}\nsubseteq \mathcal T\}$ with $t^\star=+\infty$ if no such index exists.
Since $U^t_{n+1}\subset U^{t'}_{n+1}$ for $t\le t'$, we have $V_{U^t_{n+1}}\subseteq V_{U^{t'}_{n+1}}$, hence $\{V_{U^t_{n+1}}\nsubseteq\mathcal T\}\subseteq \{V_{U^{t'}_{n+1}}\nsubseteq\mathcal T\}$.
Therefore, if $V_{U^{\hat t}_{n+1}}\nsubseteq\mathcal T$, then $\hat t\ge t^\star$.

On the failure event $V_{\widehat U}\nsubseteq \mathcal T$, we have $t^\star\le \hat t$.
By the definition of $\hat t$, $S(U_{\hat t})\le \tau_\alpha$.
By monotonicity and $t^\star\le \hat t$, it follows that
\[
S(U_{t^\star}) \le S(U_{\hat t}) \le \tau_\alpha.
\]
Therefore, we have: 
\[
\{V_{\widehat U}\nsubseteq \mathcal T\}
\subseteq
\{t^\star<\infty\}\cap\{S(U_{t^\star})\le \tau_\alpha\}.
\]

Define the test score $Z_{m+1}:=S(U_{t^\star})$.
By exchangeability of $(G_1,\dots,G_n,G_{n+1})$ and the identical construction of $\{Z_i\}$ and $Z_{m+1}$, the multiset $\{Z_i\}_{i\in\mathcal I_{\mathrm{bad}}}\cup\{Z_{m+1}\}$ is exchangeable.

The split-conformal quantile construction then yields \cite{vovk2005algorithmic,shafer2008tutorial,angelopoulos2021gentle} 
\[
\P (Z_{m+1}\le \tau_\alpha)\le \alpha.
\]
Combining with the previous inclusion gives
\[
\P \!\left(V_{\widehat U}\nsubseteq \mathcal T\right)\le \alpha,
\]

According to Eq (\ref{eq:proof_theorem_nf_1}), we have:
\begin{align*}
\mathbb{P} \left(V_{\widehat U}\subseteq \mathcal T\right)\ge 1-\alpha, 
\end{align*}
which completes the proof of Theorem \ref{thm:no_false_coverage}.

\end{proof}

\subsection{Proof of Theorem \ref{thm:no_missing_coverage}}
\label{appendix:subsec:tech_proof_thm_nm}

\begin{appendix_theorem}
(Theorem \ref{thm:no_missing_coverage} restated, no-miss coverage guarantee).
Suppose that $S(\cdot)$ satisfies the nested property.
Let $\widehat U^\nm$ be the output returned by Algorithm~\ref{alg:cp-test} with threshold $\tau^\nm_{1-\alpha}$.
Then the inference procedure satisfies no-miss coverage at level $1-\alpha$,
i.e., $\mathbb{P} \left( \mathcal T \subseteq V_{\widehat U^\nm} \right) \ge 1 - \alpha$.
\end{appendix_theorem}

\begin{proof}
(of Theorem \ref{thm:no_missing_coverage})

The argument follows the same conformal calibration template as in the proof of Theorem~\ref{thm:no_false_coverage} in Appendix \ref{appendix:subsec:tech_proof_thm_nf}, with the coverage event adapted to the no-miss objective.
Similarly, we use $\widehat U$ to replace $\widehat U^\nm$ in this proof for simplification.

Recall that no-miss coverage requires
\[
\P \!\left( T \subseteq V_{\widehat U} \right) \ge 1-\alpha ,
\]
i.e., all true nodes are contained in the predicted ancestor-closed subgraph with probability at least $1-\alpha$.
Equivalently, a violation occurs if the generated subgraph misses at least one true node.

We then apply the same rank-based quantile argument on the calibration set, now using the $(1-\alpha)$-quantile of the nonconformity scores over minimal ancestor-closed supergraphs of the true subgraph.
Monotonicity of the score ensures that the stopping time $\widehat U$ is the minimal expansion whose score does not exceed the calibrated threshold.

The remaining steps follow verbatim from the proof of Theorem~\ref{thm:no_false_coverage} in Appendix \ref{appendix:subsec:tech_proof_thm_nf} and are therefore omitted.

\end{proof}

\section{More Details on Datesets}
\label{lab:data}
\textbf{Frequency score used for risk estimation.}
We use \emph{frequency score} computed by sampling $n_{\mathrm{samples}}{=}5$ continuations conditioned on the original question and each step's subclaim following \cite{mohri2024language}. The resulting per-node scores are then aggregated by the learned graph-level uncertainty function for graph-level risk estimation.

\textbf{Reasoning graph construction.}
Given a question and the list of generated subclaims, we construct a directed dependency graph following \cite{rubin2025conformal}, where each subclaim is a node and edges represent logical dependencies. The graph is represented as an $N \times N$ adjacency list (matrix form), where an edge $i \rightarrow j$ indicates that subclaim $j$ depends on subclaim $i$. The prompt we use to generate adjency lists are listed following:

\begin{Verbatim}[breaklines=true,breakanywhere=true,fontsize=\footnotesize]
You are a system designed to create dependency graphs for subclaims in response to a given question.
Your output must strictly adhere to the following instructions:

1. Graph Description:
- Represent dependency relationships between subclaims as a directed graph.
- Each subclaim is a vertex in the graph.
- An edge (b -> a) exists if subclaim "a" depends on subclaim "b".
- Subclaims that are a priori (e.g., assumptions or definitions) should not have any ancestors.

2. Output Format:
- Provide your graph as an adjacency list of size NUM x NUM, where NUM is the number of subclaims.
- A template adjacency list with all entries zero will be included at the end of the prompt as reference.
- Replace 0s with 1s where relevant dependencies exist.
- A value of 1 at position i in row j indicates that subclaim j depends on subclaim i.
- A value of 0 indicates no dependency.
- Ensure no claim depends on itself (diagonal entries must be 0).

3. Rules:
- The adjacency list must be square with exactly NUM rows and NUM columns.
- Each row must contain exactly NUM integers.
- Output must consist solely of the adjacency list (e.g., [[0,1,0],[0,0,1],[0,0,0]]).
- Do not include explanations, commentary, or any other formatting.

4. Dependencies:
- Consider explicit and implicit dependencies between subclaims.
- Always represent dependencies, even if the subclaims are incorrect.

Now provide your adjacency list for the following question and subclaims:
Question: <QUESTION TEXT>

NUM = <N>
Subclaims:
1. <subclaim 1>
2. <subclaim 2>
...
N. <subclaim N>

Template:
[[0,0,...,0],
 [0,0,...,0],
 ...
 [0,0,...,0]]
\end{Verbatim}

\section{Implementation of Thinking via Inference-Time Calibration}
\label{app:prompts}

For reproducibility, we report the prompt templates used for (i) step generation, and (ii) Step intervene.

\textbf{Step generation.}
We enforce a strict JSONL schema for step-by-step reasoning and a single final answer line.

\begin{Verbatim}[breaklines=true,breakanywhere=true,fontsize=\footnotesize]
[System]
Solve the problem step by step.
Output exactly ONE line per step using the strict JSON format below:
Step t: {"subclaim": "<one short reasoning step>", "gpt-score": <number>}
Here, gpt-score is your confidence that this step is correct.

Rules:
- Use only the two keys: subclaim and gpt-score.
- Do not output any extra text.
- Do not output 'Answer:' until all steps are finished.

Finally output exactly one line:
Answer: <final numeric answer>

[User]
Problem: {problem}
\end{Verbatim}

Although the model outputs a self-reported \texttt{gpt-score} during step generation, our pipeline replaces it with a \emph{frequency score} and the resulting frequency score is used as the step-level risk signal for reasoning subgraph risk estimation.
Given a question and the list of generated subclaims, we construct a directed dependency graph required by our conformal inference procedure.

\textbf{Step intervene.}
When current reasoning subgraph is flagged as high risk, we rewrite only the current step while keeping previous steps fixed, and then continue generation from a fixed prefix. Concretely, we prompt the same backbone LLM with a dedicated
system--user prompt pair:

\begin{Verbatim}[breaklines=true,breakanywhere=true,fontsize=\footnotesize]
[System]
Rewrite one math-solution step in strict JSON.
Output exactly one line in this format:
  Step t: {"subclaim": "<one short reasoning step>", "gpt-score": <number>}
Use only the keys subclaim and gpt-score.
gpt-score must be one of: 1, 0, -1.
Do not output explanations, markdown, or extra text.

[User]
Problem: {problem}

Step {t}: {current_step}

This step may be unreliable. Rewrite it more carefully.
Rewrite ONLY Step {t} to be correct and consistent with earlier steps.
Output exactly one line: Step {t}:
\end{Verbatim}

\noindent
The rewritten step inherits the same JSONL schema as the generation stage, so the subsequent pipeline (frequency scoring, graph construction, and conformal testing) proceeds without modification.

\section{Implementation of the Uncertainty Estimator $\FU_\theta$}
\label{app:fu}

We randomly split the dataset into a mapping set, a calibration set, and a test set with a ratio of $30\%/35\%/35\%$. The mapping set is used to learn the uncertainty estimator. Our uncertainty estimator is parameterized as the composition of a graph representation function $\phi$ and a classifier $f$, i.e., 
\[
\FU_\theta = f \circ \phi 
\] 
Each intermediate subgraph $U$ is embedded into a representation vector via
\[
\phi(U,\{\fu(v)\}_{v \in V_U}) \in \R^d
\]
Specifically, we construct the normalized graph Laplacian of $U$ and extract the smallest $k$ non-trivial eigenvalues as spectral features. Additionally, we compute summary statistics over the node-level confidence score $\fu(v)$, including the mean, standard deviation, minimum, and maximum across nodes in $V_U$.

We parameterize $f$ as a multi-layer perceptron (MLP) classifier. Formally,
\[
f(\phi(U,\{\fu(v)\}_{v \in V_U})) = \mathrm{Sigmoid}(\mathrm{MLP}_\theta(\phi(U,\{\fu(v)\}_{v \in V_U})))
\]
During training, we employ a binary focal loss:
\[
\mathcal{L}_\mathrm{focal} = -\alpha_t(1-p_t)^\gamma\mathrm{log}(p_t)
\]
where $p_t$ denotes the predicted probability assigned to the true class, and $\alpha$ is class-dependent weighting. Unless otherwise stated, We use focal loss with hyperparameters $\alpha=0.25$ and $\gamma=2.0$ in all experiments. The model is optimized using the Adam optimizer with a learning rate of 0.001 and train for 500 epochs. The hidden dimension is set to 64, and batch size is set to 32.

\section{Additional Experimental Results}
\label{appendix:additional_results}


\subsection{Coverage-efficiency trade-offs}
Figures~\ref{fig:gsm_cov_eff} and~\ref{fig:ma_cov_eff} report  the coverage-efficiency trade-offs as the target coverage level \(1-\alpha\) varies from $0.75$ to $0.95$ with a range $0.05$, on GSM8K and MATH, respectively.
Across all target coverage levels, ITCR consistently satisfies both no-miss and no-false coverage guarantees, while its empirical coverage closely tracks the target coverage \(1-\alpha\).
At the same time, ITCR achieves the highest efficiency among all valid methods, with efficiency degrading smoothly as target coverage increases, reflecting near-tight inference-time generation rather than conservative over-coverage.

\begin{figure*}[!htb]
    \centering
    \begin{minipage}{0.24\linewidth}
    \centering
    \textbf{(a)} No-false Coverage
    \includegraphics[width = \linewidth]{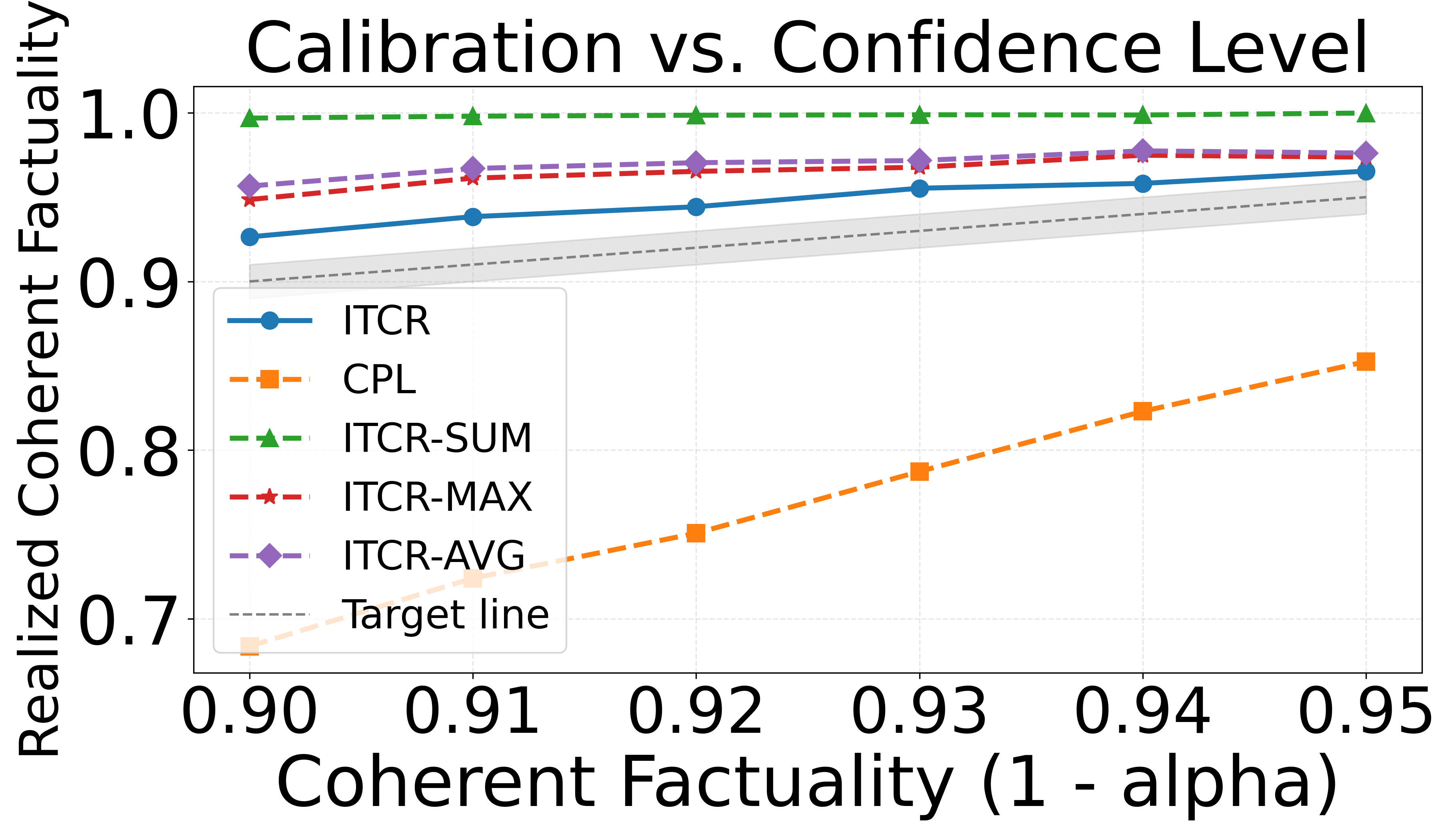}
    \end{minipage} 
    \begin{minipage}{0.24\linewidth}
    \centering
    \textbf{(b)} No-miss Coverage
    \includegraphics[width = \linewidth]{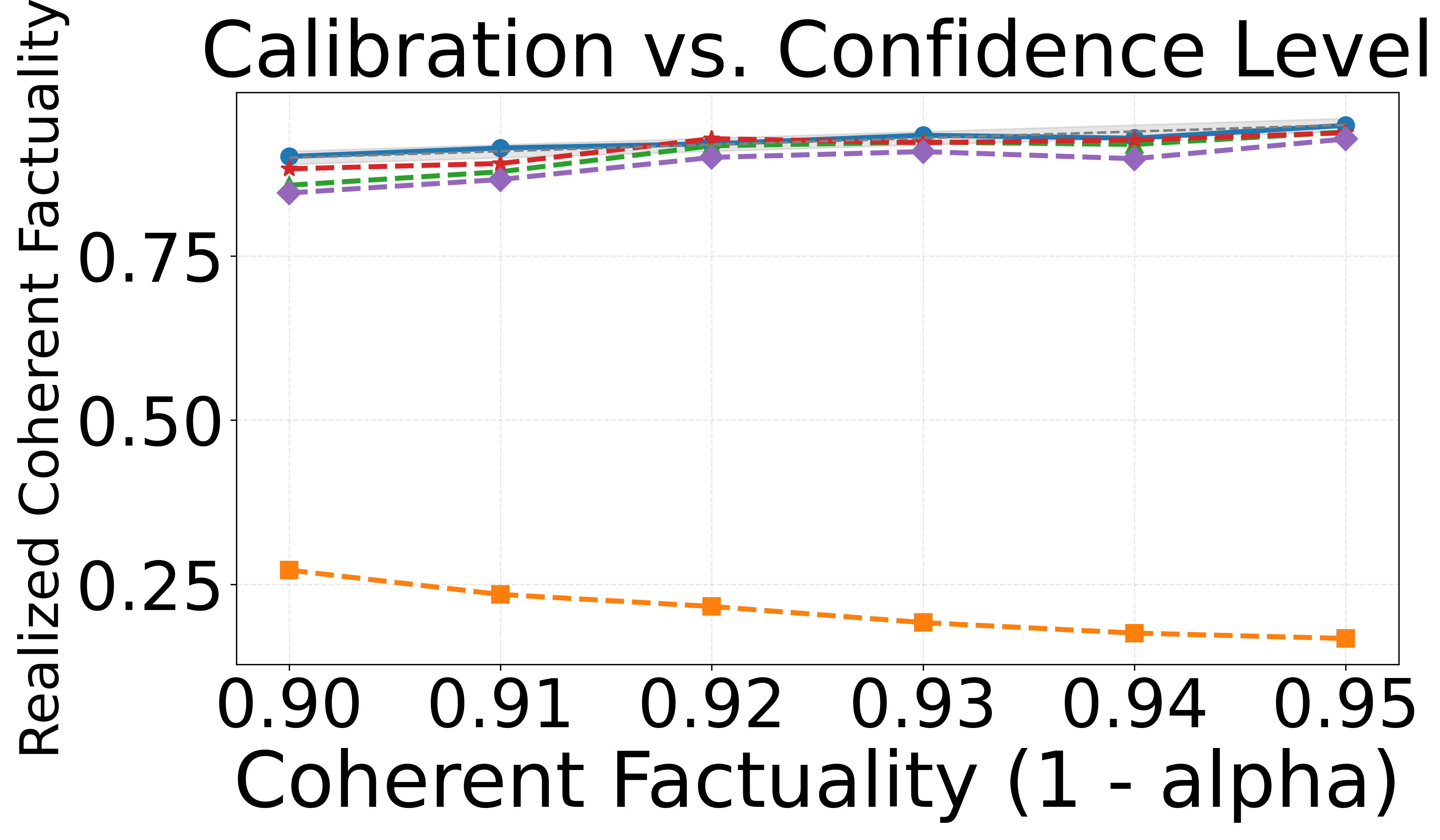}
    \end{minipage} 
    \begin{minipage}{0.24\linewidth}
    \centering
    \textbf{(c)} No-false Efficiency
    \includegraphics[width=\linewidth]{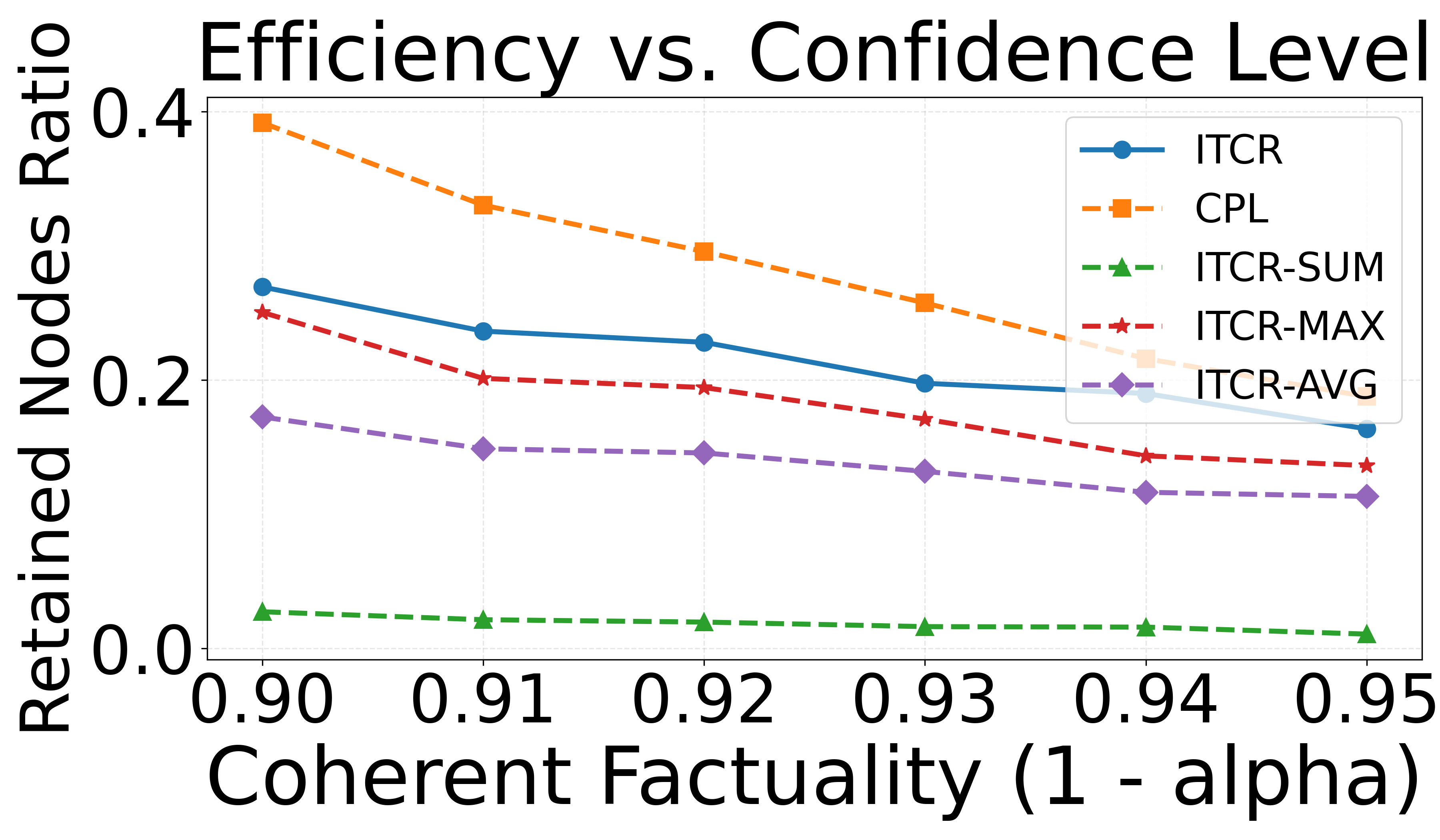}
    \end{minipage} 
    \begin{minipage}{0.24\linewidth}
    \centering
    \textbf{(d)} No-miss Efficiency
    \includegraphics[width=\linewidth]{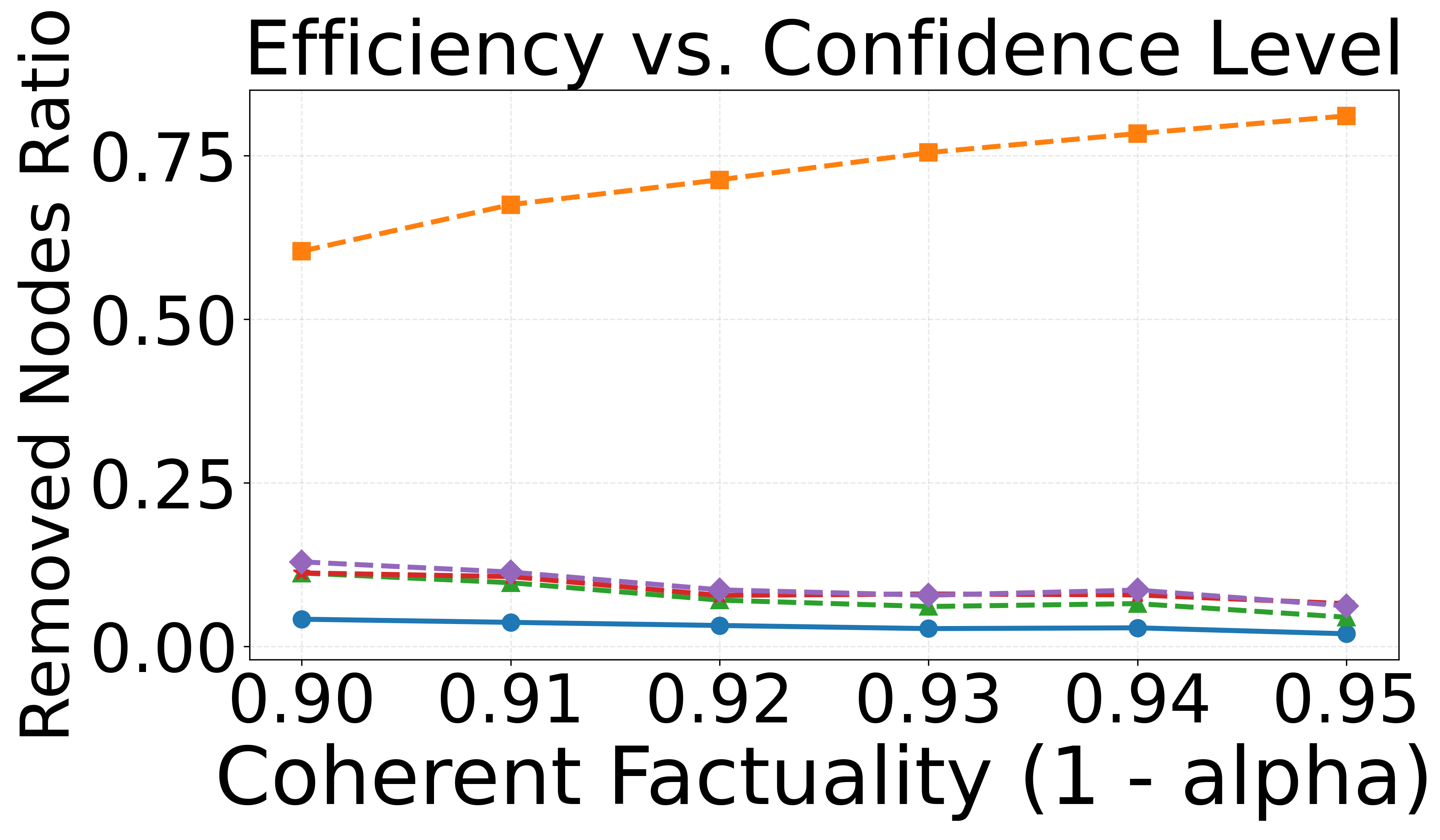}
    \end{minipage} 
    \caption{
    \textbf{Coverage and efficiency across target confidence levels \(1-\alpha\) under no-false and no-miss objectives} on GSM8K dataset.
    Subfigure \textbf{(a) and (b)} report the achieved empirical coverage as a function of the target coverage, with the dashed gray line indicating the ideal calibration line.
    Subfigure \textbf{(c) and (d)} report the corresponding efficiency, measured by the retained node ratio, where lower values indicate more compact reasoning subgraphs.
    ITCR remains close to the target calibration line and exhibits higher efficiency with valid coverage than baselines across target coverage levels.
    }
    \label{fig:gsm_cov_eff}
\end{figure*}


\begin{figure*}[!htb]
    \centering
    \begin{minipage}{0.24\linewidth}
    \centering
    \textbf{(a)} No-false Coverage
    \includegraphics[width = \linewidth]{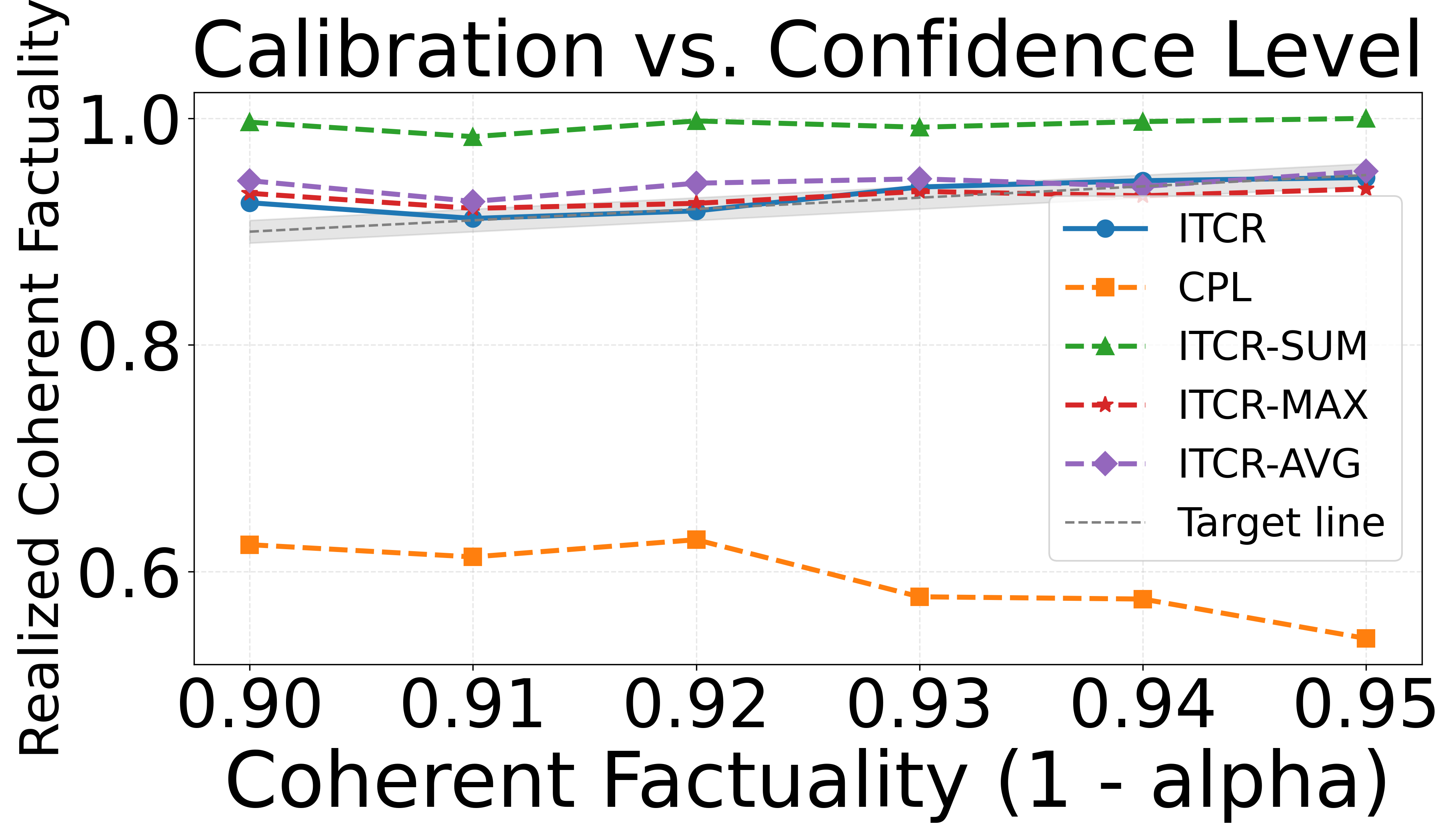}
    \end{minipage} 
    \begin{minipage}{0.24\linewidth}
    \centering
    \textbf{(b)} No-miss Coverage
    \includegraphics[width = \linewidth]{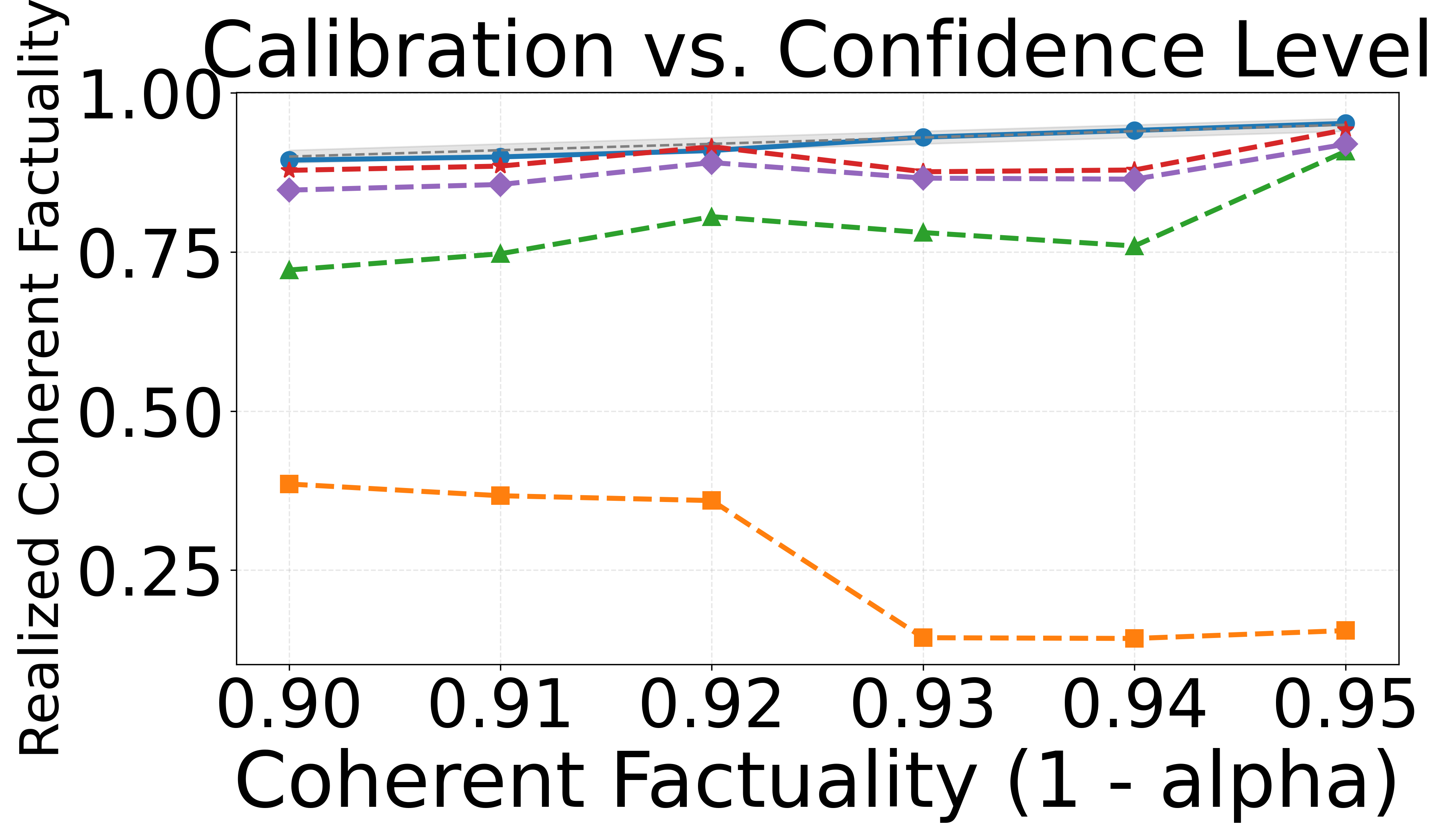}
    \end{minipage} 
    \begin{minipage}{0.24\linewidth}
    \centering
    \textbf{(c)} No-false Efficiency
    \includegraphics[width=\linewidth]{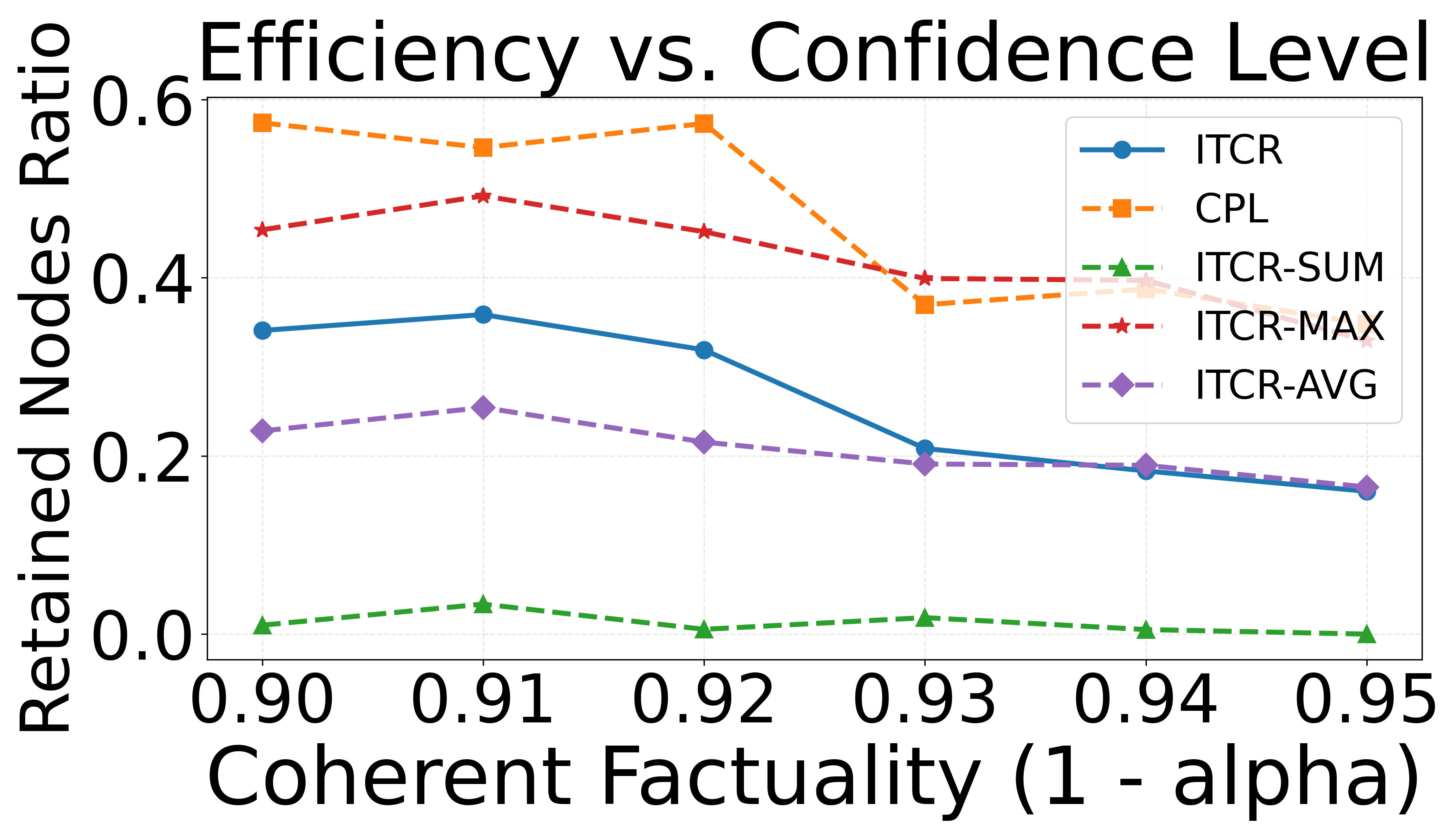}
    \end{minipage} 
    \begin{minipage}{0.24\linewidth}
    \centering
    \textbf{(d)} No-miss Efficiency
    \includegraphics[width=\linewidth]{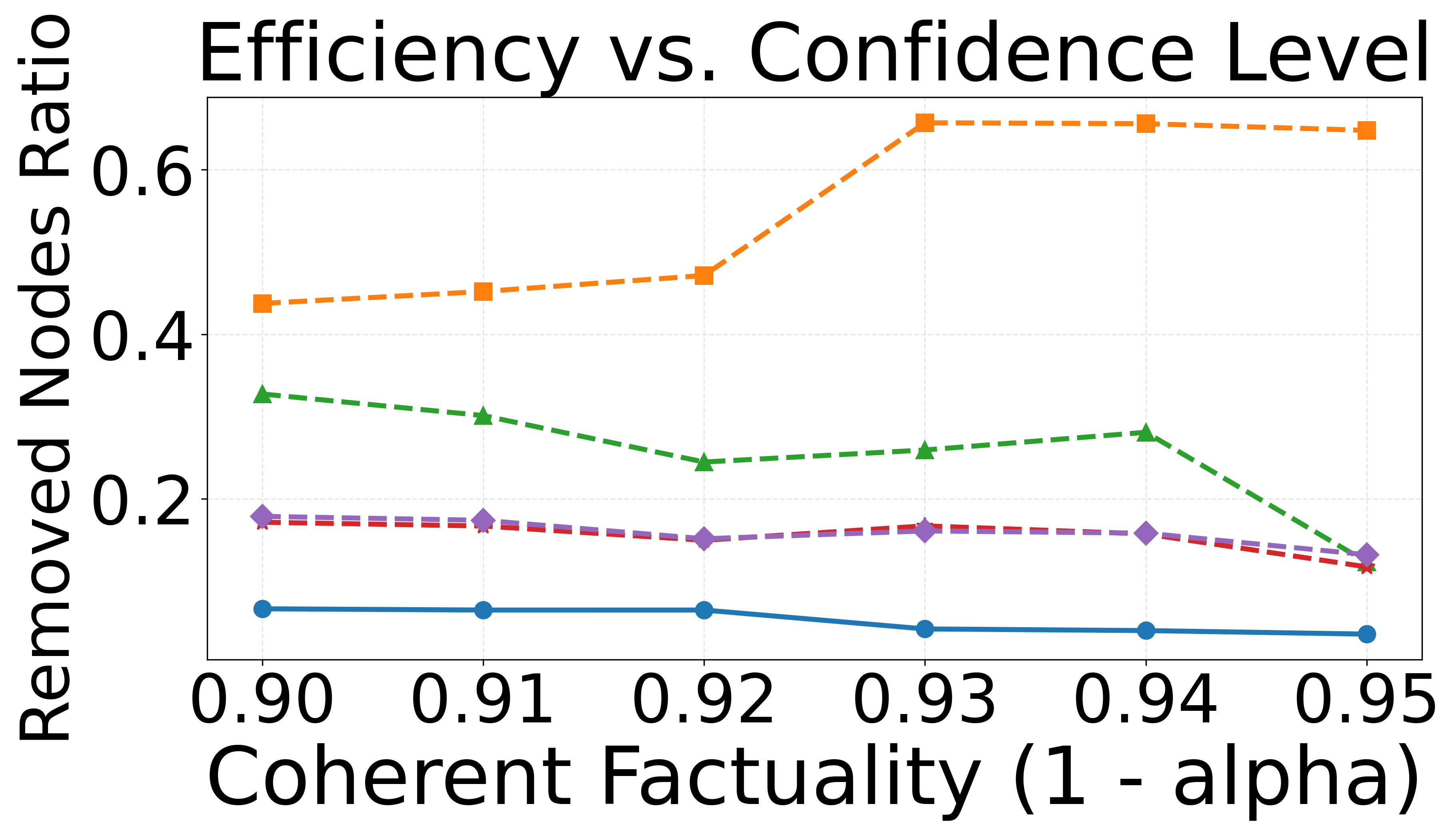}
    \end{minipage} 
    \caption{
    \textbf{Coverage and efficiency across target confidence levels \(1-\alpha\) under no-false and no-miss objectives} on MATH dataset.
    Subfigure \textbf{(a) and (b)} report the achieved empirical coverage as a function of the target coverage, with the dashed gray line indicating the ideal calibration line.
    Subfigure \textbf{(c) and (d)} report the corresponding efficiency, measured by the retained node ratio, where lower values indicate more compact reasoning subgraphs.
    ITCR remains close to the target calibration line and exhibits higher efficiency with valid coverage than baselines across target coverage levels.
    }
    \label{fig:ma_cov_eff}
\end{figure*}




\subsection{Sensitivity analysis on prompt variation}

We conduct additional experiments with an alternative system prompt:
\begin{Verbatim}[breaklines=true,breakanywhere=true,fontsize=\footnotesize]
Solve the problem step by step.
Show your reasoning process and end with the final answer in the format: 
Answer: \<final numeric answer\>
\end{Verbatim}
We set the target confidence level to
80\% and evaluate on 50 randomly sampled test questions.
As shown in Table \ref{tab:ab_prompt}, ITCR still substantially outperforms PostCal under the alternative CoT prompt across all three backbones, demonstrating that our method is robust to prompt variation.

\begin{table}[!htb]
\centering
\caption{
\textbf{Comparison of ITCR and PostCal across different LLM backbones} on GSM8K dataset under different prompt.
All metrics are in \%. The better result in each pair is in \textbf{bold}.
}
\label{tab:ab_prompt}
\small
\begin{tabular}{llccc}
\toprule
\textbf{LLM Backbone}
& \textbf{Method}
& \textbf{PCR}$\uparrow$
& \textbf{NCR}$\downarrow$
& \textbf{PCR--NCR}$\uparrow$ \\
\midrule

\multirow{2}{*}{LLaMA-3.1-8B-Instruct}
& PostCal       & 6.06  & 70.59 & -64.53 \\
& \textbf{ITCR} & \textbf{21.21} & \textbf{5.88}  & \textbf{15.33} \\
\midrule

\multirow{2}{*}{Qwen3-4B-Thinking-2507}
& PostCal       & 2.63  & 66.67 & -64.04 \\
& \textbf{ITCR} & \textbf{31.58} & \textbf{8.33}  & \textbf{23.25} \\
\midrule

\multirow{2}{*}{DeepSeek-R1-Distill-Qwen-1.5B}
& PostCal       & 17.14 & 33.33 & -16.19 \\
& \textbf{ITCR} & \textbf{48.57} & \textbf{26.67} & \textbf{21.90} \\
\bottomrule
\end{tabular}
\end{table}

\begin{table}[!htb]
\centering
\caption{
\textbf{Effect of dependency extraction on ITCR and PostCal performance} on GSM8K with LLaMA-3.1-8B-Instruct.
All metrics are in \%. The better result in each pair is in \textbf{bold}.
}
\label{tab:linear_chain}
\small
\begin{tabular}{llccc}
\toprule
\textbf{Sample Type}
& \textbf{Method}
& \textbf{PCR}$\uparrow$
& \textbf{NCR}$\downarrow$
& \textbf{PCR--NCR}$\uparrow$ \\
\midrule

\multirow{2}{*}{Linear Chain (90\%)}
& PostCal       & 3.33  & 73.33 & -70.00 \\
& \textbf{ITCR} & \textbf{23.33} & \textbf{6.67}  & \textbf{16.66} \\
\midrule

\multirow{2}{*}{All}
& PostCal       & 6.06  & 70.59 & -64.53 \\
& \textbf{ITCR} & \textbf{21.21} & \textbf{5.88}  & \textbf{15.33} \\
\bottomrule
\end{tabular}
\end{table}

Among 50 analyzed samples, 45{/}50 (90\%) induce a purely linear dependency chain after step segmentation: each step depends only on its immediately preceding step, with no branching or skip dependencies. We evaluate ITCR and PostCal separately on these linear-chain samples, where the dependency structure is determined by the sequential output order and requires no additional extraction. 
The results in Table \ref{tab:linear_chain} show that ITCR still substantially outperforms PostCal on the linear-chain subset. Moreover, performance on this subset is nearly identical to that on the full set (PCR–NCR: 16.66 vs. 15.33), indicating that the correction gains of ITCR do not rely on nontrivial dependency extraction.



\end{document}